\newtheorem{theorem}{Theorem}[section]
\newtheorem{proposition}[theorem]{Proposition}
\newtheorem{corollary}[theorem]{Corollary}
\newtheorem{remark}[theorem]{Remark}
\numberwithin{equation}{section}
\newcommand{\de}{\mathop{}\!\mathrm{d}}
\newcommand{\R}{\mathbb{R}}
\renewcommand{\S}{\mathbb{S}}
\DeclarePairedDelimiter{\norm}{\lVert}{\rVert}
\DeclarePairedDelimiter{\abs}{\lvert}{\rvert}
\DeclarePairedDelimiter{\pair}{\langle}{\rangle}
\DeclareMathOperator{\tr}{tr}
\DeclareMathOperator{\supp}{supp}
\DeclareMathOperator{\rank}{rank}
\DeclareMathOperator{\range}{range}
\DeclareMathOperator{\diag}{diag}
\newcommand{\bld}[1]{\boldsymbol{#1}}
\newcommand{\NN}{\mathcal{S}}
\newcommand{\M}{\mathcal{M}}
\newcommand{\V}{\mathcal{V}}
\renewcommand{\H}{\mathcal{H}}
\newcommand{\Four}{\mathcal{F}}
\newcommand{\Rad}{\mathcal{R}}
\definecolor{darkgreen}{rgb}{0.0, 0.5, 0.0} 
\definecolor{orange}{rgb}{0.8, 0.33, 0.0}
\title{Nonuniform random feature models using derivative information}\thanks{This manuscript has been authored by UT-Battelle, LLC, under contract DE-AC05-00OR22725 with the US Department of Energy (DOE). The US government retains and the publisher, by accepting the article for publication, acknowledges that the US government retains a nonexclusive, paid-up, irrevocable, worldwide license to publish or reproduce the published form of this manuscript, or allow others to do so, for US government purposes. DOE will provide public access to these results of federally sponsored research in accordance with the DOE Public Access Plan.}
\author{Konstantin Pieper \and Zezhong Zhang \and Guannan Zhang}
\date{\today}
\begin{document}

\maketitle

\begin{abstract}
We propose nonuniform data-driven parameter distributions for neural network initialization based on derivative data of the function to be approximated.
These parameter distributions are developed in the context of non-parametric regression models based on shallow neural networks, and compare favorably to well-established uniform random feature models based on conventional weight initialization.
We address the cases of Heaviside and ReLU activation functions, and their smooth approximations (sigmoid and softplus), and use recent results on the harmonic analysis and sparse representation of neural networks resulting from fully trained optimal networks.
Extending analytic results that give exact representation, we obtain densities that concentrate in regions of the parameter space corresponding to neurons that are well suited to model the local derivatives of the unknown function.
Based on these results, we suggest simplifications of these exact densities based on approximate derivative data in the input points that allow for very efficient sampling and lead to performance of random feature models close to optimal networks in several scenarios.
\end{abstract}

\section{Introduction}
Multivariate non-parametric regression is a widely used tool in machine learning. The two main approaches used in this context are reproducing kernels, which can be analyzed though the theory of reproducing kernel Hilbert spaces~\cite{Wendland_2004,scholkopf2018learning} (RKHS), and deep neural networks, where a theoretical foundation is subject of ongoing research; see, e.g.,~\cite{ShallowDeep:2017,E_overview:2020,EWojtowytsch:2020,SchmidtHieber:2020,Bartlett_Montanari_Rakhlin_2021,ma2022barron,PaNo:2022}.
However, restricted classes of neural networks, such as shallow or one-hidden-layer networks, can be analyzed thoroughly through the lens of infinite feature spaces~\cite{bengio2006convex,RossetSwirszczSrebroZhu:2007,Neyshabur:2015}, harmonic analysis of ridge functions~\cite{murata1996integral,candes1999harmonic,sonoda2017neural}, and ridge splines~\cite{kainen2010integral,klusowski2018approximation,Petrosyan:2020}.
In particular, through the choices made during the training of the network (modifications of the gradient-based training, early stopping, random initialization, and explicit regularization such as weight decay), the properties of the network can be understood either through the lens of sparse regularization~\cite{bach2017breaking,ongie2019function,parhi2021neural,chizat2022sparse,Pieper2022}, or reproducing kernels associated with the activation function and the distributions used to initialize the weights~\cite{leroux07a,neal2012bayesian,RahimiRecht:08,bach2023relationship} or both~\cite{neumayer2024effect}.
However, for the simpler class of random feature models, the precise choice of the distribution used to sample the inner weights is crucial, since it determines the performance of the method to a much larger degree as if all weights are trained after random initialization.

In this work, we investigate simple data-driven strategies that help to sample weights in a way that they will be nonuniformly distributed to favor the areas in weight space that are most likely to be necessary for good approximation with a small network. To demonstrate this, we consider a regression task \(f(x_k) \approx y_k\), for a shallow neural network model \(f\colon \R^d \supset X \to \R\) and assume that we do not only have access to input data \(x_k\) and \(y_k\), \(k=1,\ldots, K\) but also to derivative information of the model to be learned, e.g., \(g_k \approx \nabla f(x_k)\).

It is realistic to assume that such data is available in scientific regression applications, and gradient data plays a major role in tasks such as denoising score matching~\cite{DenoisingScore:2011}, optimal control with optimal value function approximation~\cite{HJBParticleNetwork:2021,KunischWalter:2023}, Bayesian optimization~\cite{BayesianOpt:2017} or nonlinear dimensionality reduction~\cite{DIPNet:2022,NonlinDR:2022}.
In contrast to derivative learning~\cite{GALLANT:1992} or unsupervised score learning~\cite{AapoScore:2005}, where the derivative of the network is learned without direct data, we assume that such data is available (or can be cheaply computed or estimated by another method).
However, the data set is not (necessarily) enhanced by the derivative data to improve fidelity~\cite{BayesianOpt:2017,SobolevTraining:2017}, but mainly to design data driven nonuniform sampling densities for hidden weight initialization.
Thus, in the context of weight initialization strategies for neural networks (e.g., \cite{WeightInitReview:2022}), we are proposing a hybrid data-driven and randomized strategy. In contrast to existing randomized approaches using derivative information (e.g., \cite{WeightInitHessian:2021}), we are incorporating the derivative data to favor weights that likely yield a good model.
We demonstrate that for random feature models without nonlinear training of inner weights, this yields an improvement in terms of the approximation quality for given number of weights.

\subsection{Framework}
In this work, we build on the established frameworks that bridge the transition between fully and partially trained shallow neural networks.
For ease of exposition, we only consider the empirical least squares loss
\begin{equation}
  \label{eq:empirical_loss}
  L(f; \bld{x}, \bld{y})
  = \frac{1}{2K}\sum_{k=1}^K (f(x_k) - y_k)^2,
\end{equation}
although the presented techniques can be easily generalized to different loss functions.
In particular, since we consider the case where data values for the gradient \(g_k = \nabla f(x_k)\) are available, these could be easily added to the objective; see~\ref{sec:discussion}.
For modeling the function, we consider function approximation using shallow networks of the form
\begin{equation}
\label{eq:shallow_intro}
x \mapsto
\NN(x;\bld{A},\bld{b},\bld{c})
= \bld{c}^T \sigma(\bld{A} x + \bld{b})
= \sum_{n=1}^N c_n \sigma(a_n\cdot x + b_n),
\end{equation}
where \(\bld{A} = (a_n)_{n=1,\ldots,N}\), \(\bld{b} = (b_n)_{n=1,\ldots,N}\), are the inner weights and offsets (also called ``biases'') and \(\bld{c} = (c_n)_{n=1,\ldots,N}\) are the linear or outer weights, and \(\sigma\), to be specified below, is the activation function.

To provide a systematic framework for network training, we first separate the network parameters into outer weights \(c_n\), which enter the model linearly, and inner weights \(\omega_n = (a_n, b_n)\), which enter nonlinearly.
To restrict the complexity of the neural network (i.e.\ be able to control the smoothness of the modeled function) only in terms of the outer weights, we restrict the inner weights to a bounded set \(\omega_n \in \Omega \subset \R^{d+1}\).
In particular, we consider
\begin{equation}
\label{eq:parameter_set}
  \Omega = \S^{d-1} \times \R = \{\,(a,b) \;|\; \norm{a} = 1 \,\}.
\end{equation}
This is motivated by positively homogeneous activation functions such as Heaviside, ReLU, or ReLU\(^{s-1}\)
% given by \(\sigma(t) = \max\{\,0, t\,\}^{s-1}/(s-1)!\)
for \(s > 1\)~\cite{parhi2021neural,ReLUkApproximation:2024},
where such a limitation does not restrict the expressive power of the network;
% over the ball \(X = B_R(0) = \{\, x\in \R^d \;|\; \norm{x} = 1 \,\}\)
see section~\ref{sec:problem_notation}.
We also consider activation functions that can be interpreted as smooth approximations of these nonsmooth activation functions; see section~\ref{sec:discussion}.
To train an optimal network we then solve the variable term nonlinear regression problem that combines the empirical loss with a sparse regularizer
\begin{equation}
\label{eq:psi_regression}
\min_{N\geq 0, \bld{c} \in \R^N, \bld{\omega}\in \Omega^N}
L(\NN(\cdot;\bld{\omega},\bld{c}); \bld{x}, \bld{y})
+ \alpha \sum_{n=1}^N \psi(\abs{c_n}).
\end{equation}
Here, the number of neurons is adaptively chosen during the optimization, and the width of the network \(N\) is only indirectly controlled through the regularization term with the regularization parameter \(\alpha > 0\), which is often the \(\ell_1\) norm with \(\psi(t) = t\).
In this case, for homogeneneous activation function, this training objective can be equivalently rewritten as a specific form of ``weight-decay'' regularization~\cite{savarese2019, ongie2019function,parhi2021neural}.
To improve the sparsity (i.e.\ reduce \(N\)) at the same approximation quality, we also consider concave subadditive penalties with \(\psi'(t) = 1\) as in~\cite{Pieper2022}.
These variable term finite optimization problems are equivalent formulations of nonparametric sparse regression problems over infinite feature spaces~\cite{RossetSwirszczSrebroZhu:2007,bengio2006convex}, which allow for rigorous analysis of expressivity, generalization, and sidestep issues of local minimizers that affect finite neural network training methods; see section~\ref{sec:infinite_feature_regression}.
In particular, for \(\ell_1\)-regularization, finite global solutions can be approximated reliably, and for other regularizers certain local solutions with global approximation properties can be obtained.
However, training these models requires not only local gradient descent (for a nonconvex, nonlinear optimization problem), but also ways of determining the appropriate finite parameter \(N\) using node insertion and deletion strategies using boosting, coordinate descent, or generalized conditional gradient methods~\cite{mazumder2011sparsenet,bredies2013inverse,boyd2017alternating,flinth2019linear,pieper_walter_2021}.
Alternatively, for homogeneous activation functions, specific forms of conventional gradient based training with random weight initialization and weight decay can be considered as over-parameterized particle based methods to compute the global optimum of~\eqref{eq:psi_regression}; cf., e.g.,~\cite{chizat2018global,chizat2022sparse,Rotskoff:2022}.

It is well-known that many of the advantages of neural networks/infinite feature space representations can be realized by the simpler technique of random feature models.
Here, the inner weights are sampled in an i.i.d.\ fashion from some \emph{a-priori} chosen probability distribution \(M\) on the parameter set \(\Omega\), and not trained by gradient descent.
Often, these are simple distributions, such as uniform or Gaussian distributions~\cite{RahimiRecht:08,leroux07a,RudiRosasco:2017,bach2023relationship,zhang2023transnet}, where the mean and variance is solely informed by the network size, employed activation function, and domain of interest for the input parameters.
Here, we consider the uniform distribution on the bounded parameter set
\begin{equation}
\label{eq:parameter_set_R}
  \Omega_R = \S^{d-1} \times \R = \{\,(a,b) \;|\; \norm{a} = 1, \abs{b} \leq R\,\}.
\end{equation}
A uniform density in \(\Omega_R\) of the parameter $(a,b)$ results in associated inflection hyperplanes $\{\,x\colon a\cdot x + b = 0\,\}$ that also have a constant probability density in the unit ball~\cite{zhang2023transnet}. The distribution of the location of these hyperplanes is crucial to the expressive power of a random feature model, since all of the nonlinear variation of the feature functions \(\varphi(x) = \sigma(a \cdot x + b)\)
is near the inflection hyperplanes. A uniform distribution on inflection hyperplanes will ensure all parts of the domain of interest, contained in a ball of radius \(R\), are covered by the high-variation portion of the feature functions with the same probability. 
We note that this uniform strategy results from an initial i.i.d.\ normal distribution on the weight \(a \sim \mathcal{N}(0,I_d)\) together with subsequent normalization to \(\norm{a}=1\) together with \(b \sim \mathcal{U}([-R,R])\).

After sampling the inner weights, only the outer weights are trained; since they enter linearly, we can interpret this as a random feature model
\begin{equation*}
\label{eq:random_feature}
\NN(x;\bld{\omega},\bld{c})
= \bld{c}^T \bld{\varphi}(x;\bld{\omega})
= \sum_{n=1}^N c_n\varphi_n(x,\omega_n),
\quad\text{where } \varphi_n(x,\omega_n)
= \sigma(a_n \cdot x + b_n), 
\quad (a_n,b_n) \sim \de M(a,b).
\end{equation*}
This implies that more efficient solution techniques than gradient descent are available for training, such as matrix factorization to solve the normal equations associated to a corresponding linear regression problem.  Since \(\NN(\bld{c},\bld{\omega})\)
is linear in the outer weights \(\bld{c}\), and typical loss functions are convex in \(f\), we can always reliably find their optimal values, given the inner weights.
In fact, for the regularized least squares minimization
\begin{equation}
\label{eq:ridge}
\min_{\bld{c}\in\R^N}
%L_\alpha(\bld{c}; \bld{x}, \bld{y}) =
L(\NN(\cdot;\bld{\omega},\bld{c});\bld{x},\bld{y})
+ \frac{\alpha N}{2}\sum_{n=1}^N \abs{c_n}^2
\end{equation}
and the associated feature matrix
\(\Phi_{k,n} = \varphi(x_k, \omega_n)\), the regularized least squares minimizer is given as
\[
\bar{\bld{c}}_\alpha 
= \left(\alpha N + \frac{1}{K}\Phi^\top \Phi \right)^{-1} \left(\frac{1}{K}\Phi^\top y\right)
= \frac{1}{N}\Phi^\top \left(\alpha K + \frac{1}{N}\Phi \Phi^\top \right)^{-1} y
\]
for a regularization parameter \(\alpha > 0\). In other words, we are performing kernel ridge regression with respect to the finite rank kernel 
\begin{align}
\label{eq:finite_rank_kernel}
k_{M,N}(x_k,x_{k'}) 
= \frac{1}{N} [\Phi \Phi^\top]_{k,k'}
= \frac{1}{N} \sum_{n=1}^N \varphi(x_k;\omega_n)\varphi(x_{k'};\omega_n)
\end{align}
that is determined by the distribution \(\de M\) and the finite truncation parameter \(N\).
However, in order to match the accuracy of fully trained sparse feature models, random feature models have to rely on a large degree of over-parametrization, i.e.\ \(N\) must be chosen much larger.
The degree to which this increase in parameters has to happen can be arbitrarily large, depending on the structure of the function \(f\). This stems from the adaptivity enabled by the optimal choice of the inner weights \(\omega_n\), which can not be exploited in uniform random feature models.
Our goal is to develop nonuniform probability measures \(\de M\) informed by derivative values of \(f\), which narrow this gap and enable random feature models to take advantage of some degree of adaptivity.
Before going into details, we illustrate the types of adaptivity that distinguish sparse from random feature models.

\subsection{Adaptivity}
\label{sec:intro_illustrate}
To illustrate the differences between sparse and random feature models, we start with a simple experiment with the classical sigmoid unit \(\sigma(t) = (1+\exp(-t/\delta))^{-1}\), scaled by a scaling parameter \(\delta < 1\), which approximates the classical Heaviside function for \(\delta \to 0\).
We learn a simple Gaussian profile from \(K=1000\) noisy data points on the interval \([-1,1]\). Here, we fix \(\delta = 1/80\) and sample the inner weights uniformly from \(\Omega_1\), so that the random inflection point \(x_n = -b_n/a_n\) is uniform in the data interval.
The regularization parameters for both experiments are selected with cross-validation based on a test set; see section~\ref{sec:example_1d} for quantitative results.
\begin{figure}
    \includegraphics[trim={0 0 0 8cm},clip,width=0.32\textwidth]{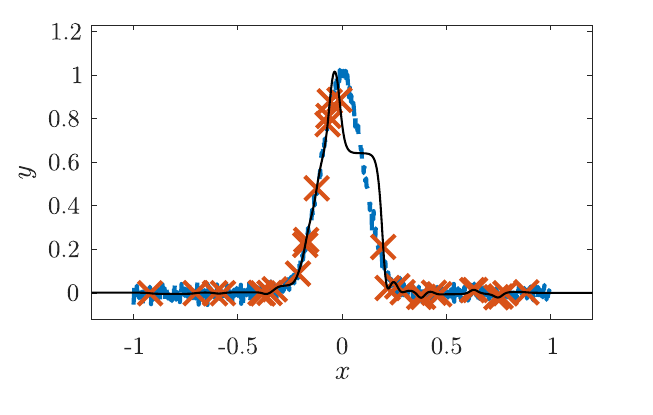}
    \includegraphics[trim={0 0 0 8cm},clip,width=0.32\textwidth]{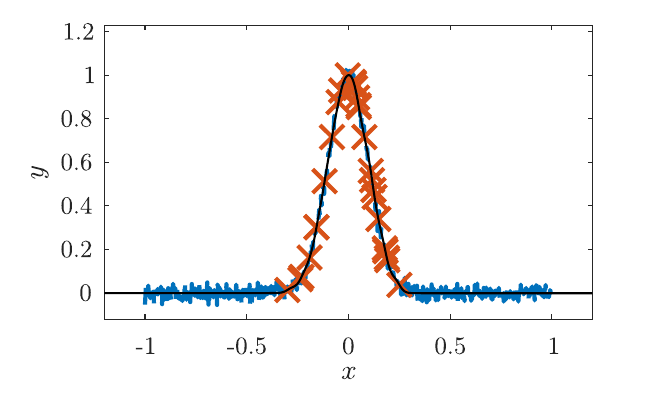}
    \includegraphics[trim={0 0 0 8cm},clip,width=0.32\textwidth]{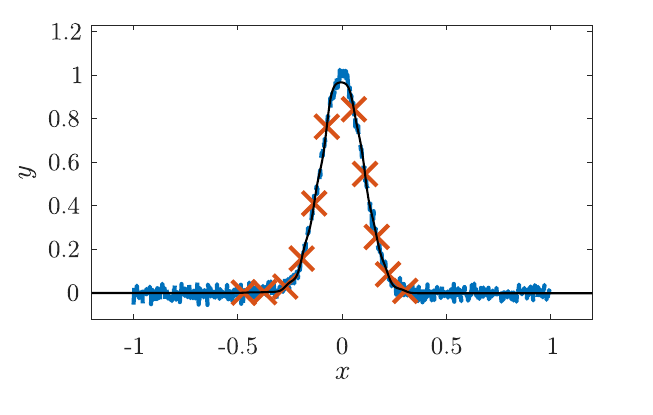}
    \caption{Comparison of uniform (left, \(N=30\)) and nonuniform (middle, \(N=30\)) random feature regression and sparse infinite feature regression (right, \(N=11\)) with for a scaled sigmoidal activation function \(\sigma(t) = 1/(1 + \exp(-t/\delta))\) with \(\delta = 1/80\). The noisy data values are dashed blue, the model function is the black line, and the inflection points \(x_n = -b_n/a_n\) of the sigmoids are orange crosses.
    The nonuniform random and sparse feature model are able to allocate more inflection points to regions of space with high variability of \(f\).}
    \label{fig:intro_spatial_adapt}
\end{figure}
From the representative results (one typical realization of the random model) in Figure~\ref{fig:intro_spatial_adapt}, we can see that the sparse feature model (fully trained network) can take advantage of the spatial localization of the slope of the function that is learned. On the other hand, the uniform random feature model is agnostic of the properties of the function, and selects the inflection points uniformly.
As a result, even with three times the number of units compared to the optimized model, the error is still noticeable.
Although the random feature model is trained by a simple normal equation linear solve based on a matrix factorization with orders of magnitude less effort than the fully trained network, this example illustrates the lack of spatial adaptivity with a uniform distribution.
In higher dimensions, this effect persists, and random feature model training by direct factorization can become more prohibitive, due to more than quadratic complexity of the linear solve in \(N\), and it is less feasible to accept the increased cost from redundant or unnecessary features. 

Aside from the lack of spatial adaptivity, a lack of directional adaptivity even more pronounced in higher dimensions, and can be the decisive factor in how much the curse of dimensionality can be mitigated. Usually, the inner weight is chosen a way such that each entry of \(a_n\) is independent of each other, e.g., i.i.d.\ Gaussian. In the case of the uniform distribution on~\eqref{eq:parameter_set_R} the kernel~\eqref{eq:finite_rank_kernel} has additional structure, and for nonsmooth homogeneous activation functions it has been shown that it converges for \(N\to \infty\) to a sum of polynomial terms and a radial kernel~\cite{leroux07a,bach2017breaking} related to polyharmonic splines.
We discuss this in detail in section~\ref{sec:uniform_random_features}, also for the case of smoother activation functions.
As a result, the random feature model can not adapt to anisotropic functions that vary more in some directions than others, as a result of having to sample the inner weight without knowledge of the function.
Consider the planar wave
\begin{equation}
\label{eq:example_1d2d}
f(x_1,x_2) = \sin(5 z_1),
\quad\text{where } z_1 = x_1 - \sqrt{2}x_2.
\end{equation}
After a linear coordinate transform, the function depends only on one variable, and is thus very easy to approximate. A shallow network with all the inner weights as a multiple of the direction \((1,-\sqrt{2})^T\) can take full advantage of that.
However, a random feature model has to treat this the same way as an isotropic two-dimensional function.
Figure~\ref{fig:intro_direct_adapt} illustrates that a fully trained network does learn this direction from the data, and is thus significantly more accurate than a random feature model even with reduced number of neurons; see section~\ref{sec:example_2d} for quantitative results.
\begin{figure}
\includegraphics[width=0.32\textwidth]{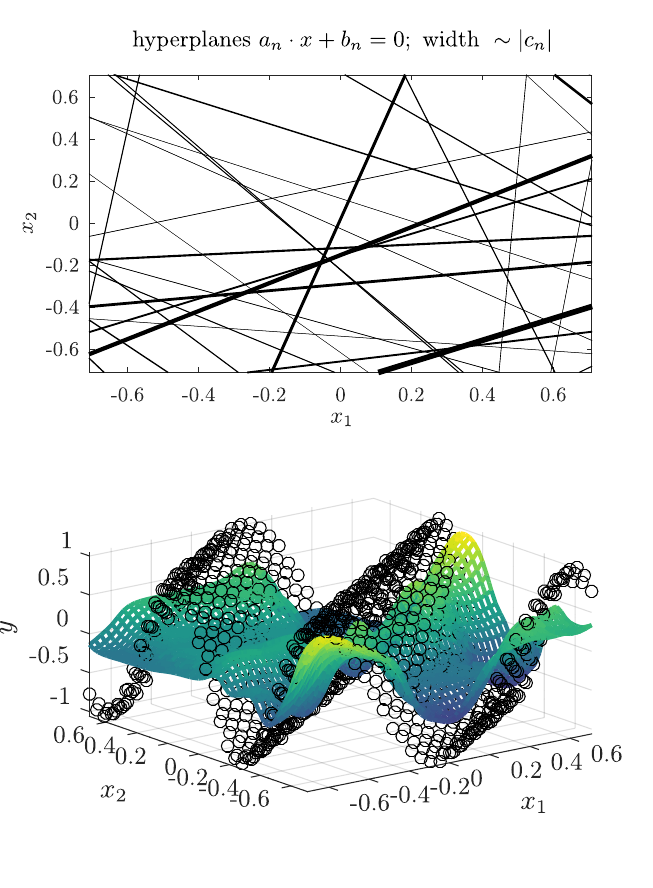}
\includegraphics[width=0.32\textwidth]{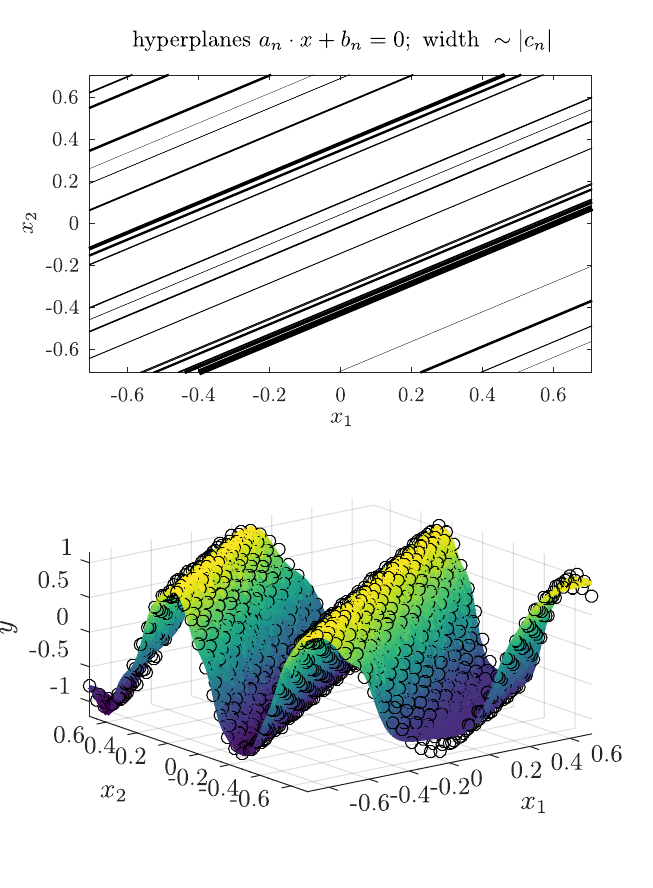}
\includegraphics[width=0.32\textwidth]{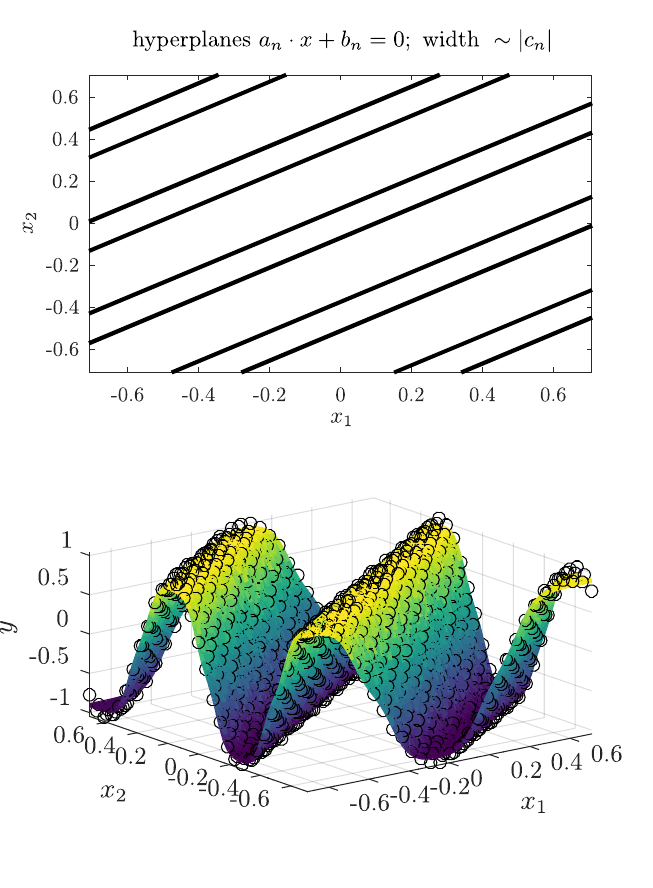}
\caption{
Comparison of uniform (left, \(N=30\)) and nonuniform (middle, \(N=30\)) random feature regression and sparse infinite feature regression (right, \(N=11\)) with for a scaled sigmoidal activation function \(\sigma(t) = 1/(1 + \exp(- t/\delta))\) with \(\delta = 1/40\). The data values are black circles and the model function is the surface (bottom plot); the inflection hyperplanes are visualized in the top figure.
The nonuniform random and sparse feature model are able to orient inflection hyperplanes with directions of variability of \(f\).
}
\label{fig:intro_direct_adapt}
\end{figure}

\subsection{Nonuniform random feature models}

The idea how to obtain nonuniform random feature models with spatial and directional adaptivity is motivated by the theoretical framework of infinite feature models.
For a large class of functions \(f\) and activation functions \(\sigma\) and different dictionaries \(\Omega\), exact representations have been obtained~\cite{murata1996integral,candes1999harmonic,sonoda2017neural,ongie2019function,Petrosyan:2020}. For smooth functions in our setting, an exact representation is a density \(c_f \in L^2(\Omega)\) (or in a space of measures; see section~\ref{sec:infinite_feature_regression}) and a low-degree polynomial \(p_0\) with
\[
f(x) = p_0(x) + \int_\Omega \varphi(x;\omega) c_f(\omega) \de \omega,
\quad\text{for all } x \in X
\]
and we summarize some existing results for non-smooth activation functions in Theorem~\ref{thm:exact_rep}.
Since these representations are unique over a space of (anti-)symmetric densities, these densities closely predict where the weights \(\omega_n\) of trained models will lie, or more precisely where the empirical measure \(\sum_{n=1}^N c_n \delta_{\omega_n}\) associated to a trained network will concentrate its mass.

The motivation for our developments is: given a function \(f\) that is approximately represented by the density \(c_f\) we would like to sample the weights of the network according to the probability density
\[
\de M(\omega) 
= \frac{1}{m_f} \abs{c_f(\omega)}\de\omega,
\quad m_f = \int_\Omega \abs{c_f(\omega)}\de\omega.
\]
Corresponding formulas for nonsmooth activation functions have been been derived and can be used as a basis for sampling (e.g., using rejection sampling; see~\cite{NonparametricWeight:2013}). We extend this by deriving approximate representation formulas for smooth activation functions with small smoothing parameter; see Corollary~\ref{cor:exact_rep}. Since these formulas involve weighted volume integrals over \(x \in \R^d\), they can be approximated using the data samples in certain situations.

However, the obtained densities are still difficult to sample from since they involve integrals over the data space (which are not available, and thus can only crudely be approximated), and have to rely on rejection sampling on the high dimensional parameter space. Motivated by the corresponding exact formulas, we will instead suggest more simple approximations to the representation formula, which lead to simple densities that can efficiently be sampled from.
For sigmoidal functions, which we interpret as smoothed Heaviside functions, this will lead to a mixture density that randomly selects a data-point and samples for this data point \(x_k\) weights \(a,b\) associated to hyperplanes \(\{\,x \;|\; a \cdot x + b = 0\,\}\) that approximately fulfill:
\begin{itemize}
\item 
    orthogonality of the hyperplane to the gradient data; \(g_k \parallel a\), \(g_k \approx \nabla f(x_k)\),
\item 
    intersection with the data; \(a\cdot x_k + b \approx 0\),
\item
    relative likelihood of the hyperplane proportional to \(\norm{g_k}\).
\end{itemize}
The result of this can be seen in the middle plots of Figures~\ref{fig:intro_spatial_adapt} and~\ref{fig:intro_direct_adapt}.
Clearly, the outlined procedure delivers a marked improvement in accuracy over the uniform random feature model. Moreover, it is apparent that the nonuniform random weights are more clearly aligned with the optimal weights of the sparse, fully trained network.

Finally, we compare to another established technique of directional adaptivity, commonly referred to as active subspaces (AS)~\cite{ASM:2014,Constantine:2015,AS_VectorValued:2020} and applied to high dimensional ridge function neural networks in~\cite{DIPNet:2022}. Here, a truncated singular value decomposition of the gradient matrix
\[
\bld{U} \bld{\Sigma} \bld{V}
\approx \bld{G}
= \left[\begin{matrix}g_1 & g_2 & \ldots & g_K\end{matrix}\right]
\]
is used to obtain an optimal set of reduced variables \(x \approx \bld{U}_1 z\), with the active subspace spanned by the first \(d'\) columns of \(\bld{U}\) and \(z \in \R^{d'}\), \(d' < d\).
This can be applied to any regression technique in a pre-processing fashion. In the context of random weight initialization, we do not have to perform this pre-processing, and can simply sample the weights in a fashion that is proportional to the linear transformation \(\bld{U} \bld{\Sigma}\); see section~\ref{sec:AS}. While this works well for functions that are expressible on a reduced subspace, such as the two dimensional illustrative example from~\ref{fig:intro_direct_adapt}, our more general technique maintains benefits also if the reduced subspace changes in a nonlinear way as a function of the point \(x\). For an illustration see Figure~\ref{fig:intro_checkmark}, where a function that varies slowly along a nonlinear manifold in the form of a check-mark is approximated by different methods; see section~\ref{sec:example_checkmark} for a detailed description.
In the results, qualitative improvements are seen going from a random feature model with uniform weight selection (section~\ref{sec:uniform_random_features}) to a nonuniform random feature model (section~\ref{sec:gradient_nonuniform_random_features}) with the same number of weights.
As in the fully trained model, the hyperplanes induced by the inner weights align with the level set lines of the underlying functions, although there is no globally dominant direction along which the function is nearly constant.
\begin{figure}
\includegraphics[width=0.32\textwidth]{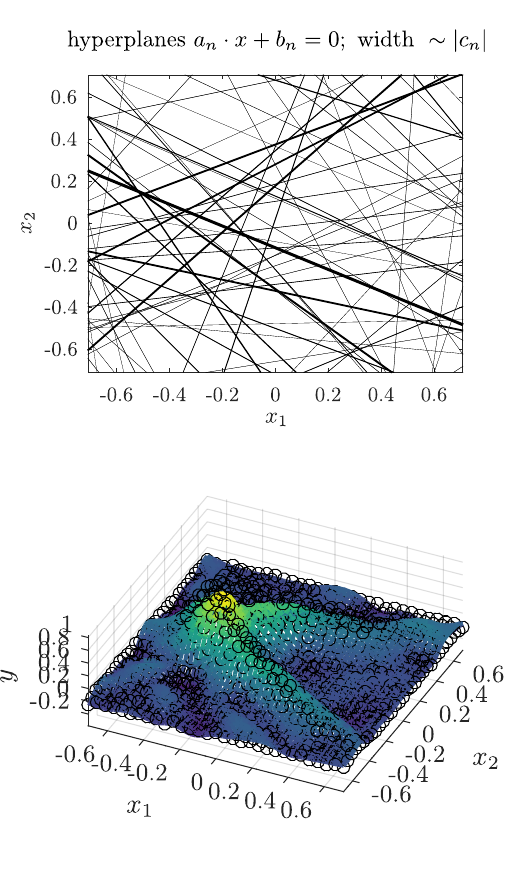}
\includegraphics[width=0.32\textwidth]{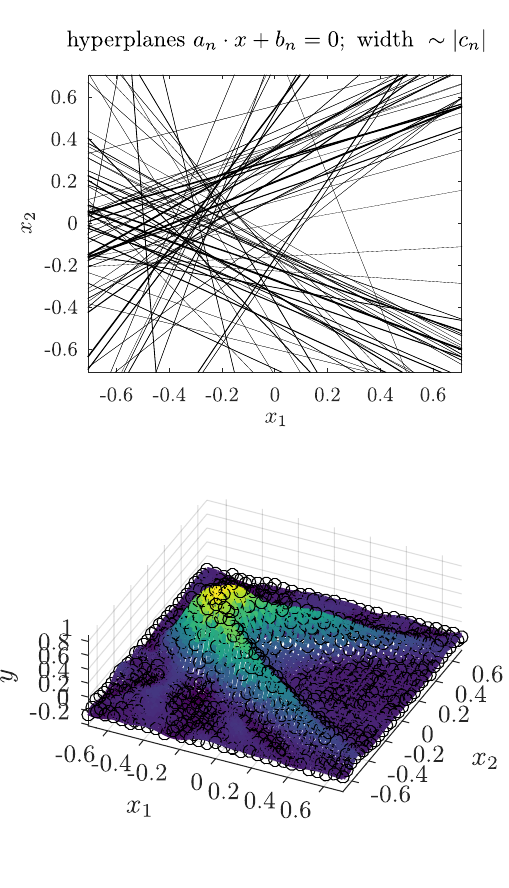}
\includegraphics[width=0.32\textwidth]{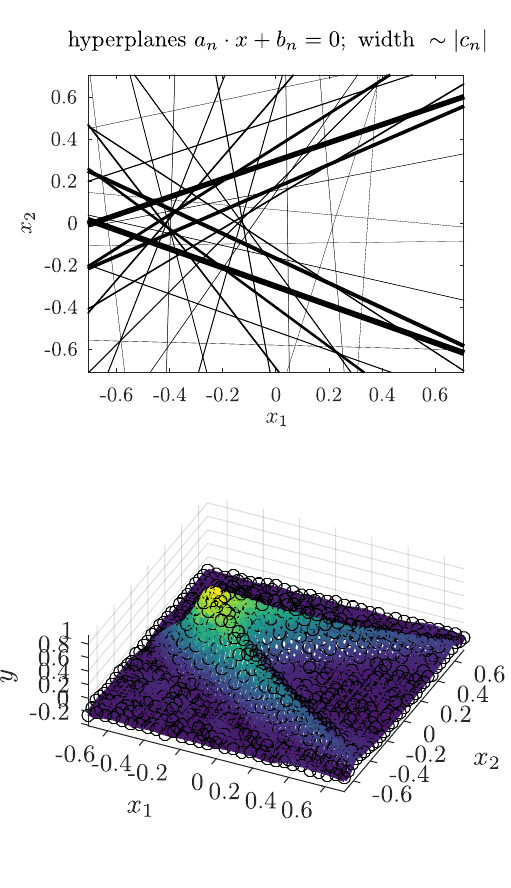}
\caption{Random feature models with a sigmoidal activation function, uniform weight sampling, gradient based weight sampling (each with \(N=75\)), and sparse feature learning with \(N=31\) (from left to right).
The data values are black circles and the model function is the surface plot (bottom); the inflection hyperplanes are visualized as lines (top).
The nonuniform random and sparse feature model are able to orient inflection hyperplanes with respect to directions of variability of \(f\) and associate them to regions in parameter space with high variability.
}
\label{fig:intro_checkmark}
\end{figure}

\subsection{Limitations and extensions}
\label{sec:discussion}

Although we consider smooth activation functions, we only treat them as smooth approximations to nonsmooth homogeneous activation functions; see section~\ref{sec:feature_functions}. By this choice, we fix the inner weight \(\norm{a} = 1\) and introduce a scaling hyper-parameter \(\delta > 0\) that represents the transition zone, where \(\delta=0\) is the infinitely fast transition representing a non-smooth homogeneous activation function.
This allows us to use the nonsmooth activation functions as a template for weight generation using the exact reconstruction formula, which is uniquely determined up to symmetries. However, it limits the flexibility of the resulting neural network approximation, which looses the multi-scale properties resulting from an individual choice of \(\norm{a} \sim 1/\delta\) for each neuron.
We note that the exact reconstruction formulas derived in~\cite{sonoda2017neural} for the general case contains a large degree of flexibility in the choice of the reconstruction formula (since the dual ridgelet is not uniquely determined).
Corresponding weight sampling formulas, as suggested in~\cite{NonparametricWeight:2013} based on the earlier results~\cite{murata1996integral} thus rely on a the specific choice of dual ridgelet and are not canonical, in contrast to nonsmooth activation functions that yield a unique reconstruction formula.
We expect that the proposed sampling strategies, which suggest important directions for the inner weights \((a,b)\) based on derivative information, will still be useful if \(\norm{a}\) or \(\delta\) is chosen differently for each neuron, but do not pursue this here.

Additionally, since we assume that gradient data \(g_k \approx \nabla f(x_k)\) is available, it would be natural to add it to the loss function (cf., e.g.~\cite{SobolevTraining:2017}) and consider a loss of the form
\begin{equation*}
  %\label{eq:empirical_gradient_loss}
  L(f; \bld{x}, \bld{y}, \bld{g})
  = \frac{1}{2K}\sum_{k=1}^K \left[(f(x_k) - y_k)^2
   + \lambda \norm{\nabla f(x_k) - g_k}^2\right],
\end{equation*}
for some \(\lambda > 0\). Although this would improve the accuracy of the trained model, it introduces additional complications concerning the regularity of the activation functions (since model gradients are not defined for Heaviside and defined only in a weak sense for ReLU), and the structure of the RKHS spaces associated to random feature models. For simplicity, we therefore restrict attention to loss function~\eqref{eq:empirical_loss}.

\section{Problem setting and notation}
\label{sec:problem_notation}

For simplicity of exposition we consider only the regression problem of learning a function $f\colon X \subset \R^d \to \R$ from values \(y_k \approx f(x_k)\) at number of input data points \(x_k\).
In general, we assume that the data is distributed according to some (potentially unknown) density \(\de \nu(x)\).
We also assume that the data have been centered and linearly transformed, such that most \(x_k\) tightly lie within a ball of radius \(R\) in the data space \(B_R(0) \subset \R^d\).
This is required to enable uniform sampling of the bias parameters \(b_k\); see Section~\ref{sec:uniform_random_features}.
In addition to the data values, we assume that we have access to
derivative values \(g_k \approx \nabla f(x_k) \in \R^d\).
There are many practical problems of interest, where these values are available, or can be easily generated by automatic differentiation at comparable cost to the data values \(y_k\).
For ReLU like activation functions (such as softplus) we also need Hessian values \(H_k \approx \nabla^2 f(x_k) \in \R^{d\times d}\).

We consider shallow neural networks in the form of the feature model
\begin{equation}
\label{eq:finite_network}
f(x)
\approx \NN(x;\bld{c},\bld{\omega})
= \sum_{n=1}^N c_n \varphi(x;\omega_n) + p_0(x),
\end{equation}
where \(p_0(x)\) is a simple polynomial of maximum degree \(s-1\) for \(s \in \{1,2,\ldots\}\). The parameters of the feature function are the inner weights \(\omega = (a, b) \in \R^{d+1}\) and every feature is given by a ridge function:
\[
\varphi(x;\omega_n)
= \sigma(a_n\cdot x + b_n),
\quad \sigma \colon \R \to \R.
\]
First, we introduce the activation functions \(\sigma\) given by the piecewise splines
\[
\sigma_s(t) = \max\{\,0,\; t\,\}^{s-1}/(s-1)!.
\]
In particular, for \(s=1\) we have the Heaviside function, and for \(s=2\) the rectified linear unit (ReLU).
Due to the homogeneity of the activation function, we can restrict the weights to the set \(\Omega\)~\eqref{eq:parameter_set};
see section~\ref{sec:feature_functions}.
Moreover, we also consider smooth versions given as
\[
\sigma_{s,\delta}(t) = [\sigma_s * \widehat{\eta}_\delta](t)
:= \int_{-\infty}^{\infty} \sigma_s(\tau) \widehat{\eta}_\delta(t - \tau)\de \tau,
\]
where \(\widehat{\eta}_\delta(t) = \widehat{\eta}(t/\delta)/\delta\) and \(\widehat{\eta}\) is an infinitely differentiable bump shaped function with integral one and vanishing first moment:
\begin{equation}
\label{eq:bump_function}
\widehat{\eta}(t) \geq 0,\quad
\widehat{\eta}(t) \to 0 \quad\text{for } t \to \infty,\quad
\int_{-\infty}^{\infty} \widehat{\eta}(t) \de t = 1,\quad \int_{-\infty}^{\infty} \widehat{\eta}(t) t \de t = 0.\quad
\end{equation}
We denote the network associated to this parameters as \(\NN_{\delta}\) or \(\NN\) if \(\delta = 0\).
We note that many popular activation functions fit in this framework, by an appropriate choice of the kernel \(\widehat{\eta}\).
Most notably, we consider:
\begin{enumerate}
\item 
    the classical sigmoid function
    \[
    \sigma_{1,\delta}(t) = (1 + \exp(-t/\delta))^{-1}
    = [\sigma_{1}*\widehat{\eta}_{\delta}] (t),
    \]
    is the convolution of the Heaviside function \(\sigma_1\) with the derivative of the sigmoid function
    \[
    \sigma_{1,\delta}'(t)
    = \widehat{\eta}_{\delta}(t)
    = \frac{1}{\delta} \widehat{\eta}\left(\frac{t}{\delta}\right)
    \quad\text{where }
    \widehat{\eta}(t)= (\exp(t/2) + \exp(-t/2))^{-2}
    = \operatorname{sech}(t/2)^2/4.
    \]
\item
    the softplus activation function is similarly a convolution with the ReLU (\(s=2\)):
    \[
    \sigma_{2,\delta}(t)
    = \delta \log(1 + \exp(t/\delta))
    = [\sigma_2 * \widehat{\eta}_{\delta}](t).
    \]
\end{enumerate}

The polynomial \(p_0\) has the maximum degree \(s-1\), and helps to simplify some derivations but is not strictly necessary. It is possible to write the polynomial in terms of a few specific ridge functions. For \(s \in \{1,2\}\) this is due to
\begin{equation}
\label{eq:ridge_to_polynomial}
(a\cdot x + b)^{s-1}/(s-1)! = \sigma_{s,\delta}(a\cdot x + b) + (-1)^{s-1} \sigma_{s,\delta}(-a\cdot x - b),
\end{equation}
which gives a constant one for \(s=1\), the linear function \(a\cdot x + b\) for \(s=2\), and serves as a basis for the \(s-1\) degree polynomial space by varying \(a\) and \(b\).
We thus omit the polynomial \(p_0\) in most cases, when it is not essential.

\subsection{Nonparametric regression model.}
To learn the functional relation \(f(x_k) \approx y_k\), we focus on standard least squares regression based on the minimization of the empirical loss~\eqref{eq:empirical_loss}
where \(f\) is given by a finite, but variable term approximation~\eqref{eq:finite_network} with undetermined \(N\) and weight matrices \(\bld{A},\bld{b},\bld{c}\). Since the possible number of parameters is not bounded, this model can best be understood as a finite discretization of an infinite (nonparametric) feature model, depending on how the weight initialization and training is performed.
The interpretation through this functional analytic lens allows to better understand the gap between random and sparse feature learning and how to construct nonuniform random feature models to narrow this gap.

\subsubsection{Infinite feature regression.}
\label{sec:infinite_feature_regression}

As a generalization of the finite, but variable term approximation~\eqref{eq:finite_network},
we consider the infinite width representation
\begin{equation}
\label{eq:network_measure}
\NN[\de\mu](x)
:= \int_\Omega \varphi(x;\omega)\de \mu(\omega),
\end{equation}
where \(\mu\) is a general distribution (i.e.\ a finite signed Radon measure) on the parameter set \(\Omega\) from~\eqref{eq:parameter_set}.
This generalizes the model~\eqref{eq:finite_network}, by the choice \(\mu = \sum_{n=1}^N c_n \delta_{\omega_n}\). We denote the space of finite Radon measures on the hidden parameter set by \(\M(\Omega)\), and introduce the non-parametric space
\[
\V 
= \{\,f = \NN[\de\mu]\;|\; \mu \in \M(\Omega) \,\}.
\]
Then, non-parametric regression according to~\eqref{eq:psi_regression} can be performed by minimizing the regularized loss
\begin{equation}
\label{eq:psi_regression_continuous}
\min_{\mu\in \M(\Omega)} L(\NN[\de\mu];\bld{x},\bld{y}) + \alpha\Psi(\mu),
\end{equation}
where \(\Psi\colon \M(\Omega) \to \R\) is the functional on the space of measures associated to \(\psi\) from~\eqref{eq:psi_regression} that measures the complexity of \(\mu\).
Classically, for \(\psi(t) = t\) this is the total variation norm \(\Psi(\mu) = \norm{\mu}_{\M}\)~\cite{bengio2006convex,RossetSwirszczSrebroZhu:2007,bach2017breaking,parhi2021neural}, but nonconvex variants with better sparsity properties have also recently been analyzed in the infinite context~\cite{Pieper2022}.
The problem~\eqref{eq:psi_regression_continuous} is also closely related to specific instances of classical neural network training with random initialization of weights and regularization by weight decay~\cite{Neyshabur:2015,parhi2021neural,chizat2018global,chizat2022sparse}.
Moreover, the training with sparsity promoting regularizers can be interpreted as producing optimal sparse approximations of exact representations as will be discussed in section~\ref{sec:exact_representation}.

\subsubsection{Random feature regression.}

Here, the weights will not be selected in an optimal way, adapted to the function \(f\), but in a random transferable way~\cite{zhang2023transnet}, that is able to represent any given function, given enough random samples. Here, the weights are sampled from a given distribution
\[
\omega_k \sim \de M(\omega),
\]
which can be interpreted as a random feature model (see, e.g.,~\cite{RahimiRecht:08,RudiRosasco:2017,bach2023relationship}) and the references therein. As a baseline, similar to~\cite{zhang2023transnet,bach2023relationship}, we consider the uniform distribution on \(\Omega_R\) from~\eqref{eq:parameter_set_R}.
If we denote the Lebesgue surface (Hausdorff) measure on the unit sphere by \(\de a\), and the Lebesgue measure on \(\R\) by \(\de b\), the uniform probability measure is given as
\begin{equation}
\label{eq:uniform_density_R}
\de M_R(a,b) = 1/m_R \de a \de b\rvert_{[-R,R]},
\end{equation}
where \(m_R = 2R \abs{\S^{d-1}}\) is the normalization constant
and \(\abs{\S^{d-1}} = 2 \pi^{n/2}/\Gamma(n/2)\) is the surface area of the sphere.

To analyze the neural network random feature model for a given probability measure \(\de M\) on \(\Omega\), one applies the law of large numbers to interpret the sum of random features as a Monte-Carlo approximation to the integral
\begin{equation}
\label{eq:integral_feature}
\NN[c\de M](x)
= \int_{\Omega} \varphi(x;\omega) c(\omega) \de M(\omega),
\end{equation}
with \(c_n = c(\omega_n) = c(a_n,b_n)\), where the coefficient function is in the Hilbert space \(c \in L^2(\Omega, \de M)\) associated to the prior distribution.

This is closely connected to the reproducing kernel Hilbert space induced by the kernel, where the finite sample is replaced with an expectation
\begin{equation}
\label{eq:kernel}
k_M(x,x')
:= \int_{\Omega} \varphi(x;\omega)\varphi(x';\omega) \de M(\omega)
\approx k_{M,N}(x,x').
\end{equation}
This kernel, in turn, is associate to the infinite/continuous problem~\cite{leroux07a}:
\begin{equation}
\label{eq:ridge_continuous}
\min_{c \in L^2(\Omega,\de M)}
%L_\alpha(c; \bld{x}, \bld{y}) = 
L(\NN[c\de M];\bld{x},\bld{y})
+ \frac{\alpha}{2}\int_{\Omega} c(\omega)^2 \de M(\omega).
\end{equation}
In fact, by Monte-Carlo approximation of the integral network and the regularization term, we have
\begin{align*}
\NN[c \de M](x)
=
\int_\Omega c(\omega)\varphi(x;\omega)\de M(\omega)  
&\approx
\frac{1}{N}\sum_{n=1}^N c(\omega_n)\varphi(x;\omega_n),\\
\int_{\Omega} c(\omega)^2 \de M(\omega)
&\approx
\frac{1}{N}\sum_{n=1}^N c(\omega_n)^2.  
\end{align*}
where \(\omega_n\) are i.i.d.\ samples from the fixed distribution \(\de M\), and \(c\) is the density of the coefficient function with respect to the probability distribution. 
Substituting now \(c_n = N c (\omega_n)\), we obtain the finite version~\eqref{eq:ridge} from~\eqref{eq:ridge_continuous}.
The kernel and the parameterization \(\NN\) are intrinsically linked by the associated reproducing kernel Hilbert space (RKHS)
\begin{equation}
\label{eq:Hilbert_by_L2}
\H_M = \{\; f = \NN[c \de M] \;|\; c \in L^2(\Omega,\de M) \;\},
\end{equation}
endowed with the canonical norm and inner product. In fact, this is exactly the native Hilbert space associated to the kernel~\eqref{eq:kernel}.
This interpretation allows to quantify the discrepancy between the sparse feature network training~\eqref{eq:psi_regression_continuous}, and the random feature expansion~\eqref{eq:integral_feature}, by comparing the associated function spaces \(\V \hookrightarrow \H_M\):
for nonsmooth activation functions such as Heaviside or ReLU, the smoothness of the functions in either space is qualitatively different, with \(\H_M\) for the uniform measure \(M_R\) related to the Sobolev space \(H^{s+(d-1)/2}(B_R)\)
and \(\V\) containing functions of maximal regularity \(H^{s-1}(B_R)\) (e.g, a finite network).
In higher dimensions, the directional adaptivity of the variation space assigns much smaller norm to functions that are almost flat in some directions; see section~\ref{sec:uniform_random_features}.

\subsection{Nonuniform random feature models}
\label{sec:nonuniform_random_features}
We can now easily explain on a high level the difference in sparse infinite feature expansion and random feature expansion, based on the infinite versions associated to each.
Since any sampling probability density \(\de M\) and any coefficient function \(c \in L^2(\Omega,\de M)\) can be interpreted as the finite signed measure
\[
\de \mu = c(\omega)\de M(\omega) \in \M(\Omega),
\]
with Jensen's inequality
\[
\left(\int_\Omega \abs*{ c(\omega) } \de M(\omega)\right)^2
\leq
\int_\Omega c(\omega)^2 \de M(\omega)
\]
we obtain that
\[
\H_{M}
\hookrightarrow \V.
\]
Moreover, we have
\(\cup_{M \in M_1(\Omega), } \H_{M} = \V\),
which can be easily seen by showing that for any function \(f = \NN[\de \mu_f]\) the probability measure
\(\de M = \de \abs{\mu_f} / \norm{\mu_f}_{M(\Omega)}\)
with the non-negative total variation measure \(\abs{\mu_f} \in \M(\Omega)\)
induces a Hilbert space that contains a given \(f \in \V\).

The interpretation of this is straightforward: The degree to which random feature expansion is more restrictive than sparse infinite features is determined by how close \(M\) is to the normalized density of the total variation measure \(\abs{\mu_f}\).
If we can guess some information about the exact density, and incorporate that  into the distribution accordingly, we expect the random feature model to perform more like the infinite feature model.
In the following, we will use existing characterizations of these exact measures to derive such nonuniform densities.
For this purpose, we will require additional data on the derivatives of \(f\), which we will assume to have available.
In particular, for Heaviside-like activation functions \(\sigma_1\) and \(\sigma_{1,\delta}\) for small \(\delta > 0\), we need access to gradient data.

\section{Function spaces associated to neural network approximation}
\label{sec:function_spaces}

To characterize the optimal densities for non-uninform random feature models, we provide the necessary tools for precise analysis and characterization of neural network sparse representation and uniform random feature models.
We first provide some notation concerning the interpretation of smooth activation functions by nonsmooth ones. Then we provide results characterizing the densities of functions in the spaces introduced in the previous section that clearly show the gap between sparse and uniform random feature models, and form the basis for the nonuniform densities. We note that a precise characterization of these spaces is important since it forms the basis for approximation and generalization analysis; cf,~\cite{barron1994approximation,bach2017breaking,PaNo:2023,ReLUkApproximation:2022,ReLUkApproximation:2024,ReLUVariationApproximation:2024}.

\subsection{Activation functions.}
\label{sec:feature_functions}
We first consider a single neuron \(c'\sigma(a'\cdot x + b')\), and recall an alternative interpretation of the inner weights: using the hyperplane \(h_{a',b'} = \{\,x \;|\; a'\cdot x + b'\,\}\), and the reciprocal of the Euclidean norm of the weight vector \(\delta = \norm{a'}^{-1}\). Note that the hyperplane does not depend on \(\delta\) in the sense that we can replace \((a',b')\) by \((a,b) = \delta(a',b')\) without changing \(p\).
The activation functions \(\sigma_s\) fulfill the homogeneity property \(\sigma_s(t/\delta) = \delta^{1-s}\sigma_s(t)\) and thus
\[
c'\sigma_s(a' \cdot x + b')
= c'\sigma_s((a \cdot x + b)/\delta)
= c  \sigma_s(a \cdot x + b),
\]
where the scaling factor \(\delta\) has been absorbed into the new outer weight \(c = c'\delta^{1-s}\).
and thus it is sufficient to look for inner weights \(a\) on the unit sphere \(\S^{d-1} = \{\,a\in \R^d\;|\; \norm{a} = 1 \,\}\).
We follow the established convention of ignoring the offset parameter \(b\) for the normalization constant; see~\cite{savarese2019,ongie2019function,parhi2021neural}.
We note that there exists another convention, which is the normalization of the combined inner weight \((a,b)\) vector~\cite{bach2017breaking,Pieper2022}. The latter convention has the advantage of placing all inner weights on the compact set \(\S^d\), whereas the former convention usually leads to expressions with simpler interpretation and corresponds more closely with interpretations of shallow networks through the Radon transform~\cite{murata1996integral,sonoda2017neural}, which is why we prefer it here.

For the smooth versions, it the parameter \(\delta\) in the activation function \(\sigma_{s,\delta}\) is related to the parameter delta introduced above: it holds
\[
\sigma_{s,1}(t/\delta) = 
\int \sigma_s(t/\delta - t') \widehat{\eta}(t') \de t'
= \int \sigma_s((t - \tilde{t})/\delta) \widehat{\eta}( \tilde{t}/\delta)/\delta \de \tilde{t}
=
\delta^{1-s}  [\sigma_s * \eta_{\delta}](t)
= \delta^{1-s}  \sigma_{s,\delta}(t),
\]
and thus
\[
c'\sigma_{s,1}(a' \cdot x + b')
= c \sigma_{s,\delta}(a \cdot x + b).
\]
Thus, if \(\delta\) is left as a learnable parameter, we can also restrict the inner weight \(a\) to the unit sphere.
However, in this work, we consider a simplified case, where \(\delta>0\) is chosen as a fixed parameter for all neurons ahead of time.
We note that this restricts the flexibility of the associated neural network, which in the general case can learn the norm of the inner weight \(\norm{a}\), which corresponds, as we have seen, to the smoothing parameter \(\delta\) for each neuron. However, it simplifies the following arguments, which are based on interpreting \(\delta\) as a relatively small smoothing parameter of the non-smooth activation \(\sigma_s\) (tied essentially to the data resolution, the spacing of the data points \(x_k\)). And we can use the known characterization results for \(\delta = 0\). Clearly we expect all of the the developed methods to improve (in a similar way) by lifting this requirement, but postpone this to future work.
Concerning the offset, we can restrict attention to the set \(b \in [-R,R]\), (or \(b \in [-R-\delta,R+\delta]\)) since for \(a \in \S^{d-1}\) hyperplanes with \(b\) outside of that interval do not intersect the data set \(X \subset B_R(0)\).

\subsection{Sparse feature models}
\label{sec:sparse_convex}

We consider the sparse measure representation~\eqref{eq:network_measure}.
Then we define the variation norm
\[
\norm{f}_{\V}
= \inf \{\,\norm{\mu}_{\M(\Omega)}
\;|\; \NN[\de\mu] = f \; \text{on } X\,\}
\]
associated to the space \(\V\).
For \(\delta = 0\), this norm is related to a Barron-norm, established in~\cite{Barron:1993,breiman1993hinging,klusowski2018approximation}. For more recent results and refinements, we refer to~\cite{ma2022barron,ReLUkVariation:2023}.
It contains all functions for which there exists an extension \(f\) to \(\R^d\) such that
\[
\abs{f}_{\mathcal{B}_s}
=
\int_{\R^d} \abs{\widetilde{f}(\xi)} \abs{\xi}^s \de \xi
\]
is finite, where \(\widetilde{f} = \Four[f]\) is the Fourier transform of \(f\), i.e., we have \(\V \hookrightarrow \mathcal{B}_s\).
Note that the problem~\eqref{eq:psi_regression_continuous}
for \(\Psi(\mu) = \norm{\mu}_{\M(\Omega)}\)
and the sparse regression problem
\begin{equation}
\label{eq:psi_regression_variation}
\min_{f \in \V} L(f; \bld{x}, \bld{y}) + \alpha \norm{f}_{\V},
\end{equation}
and its finite dimensional approximation~\eqref{eq:psi_regression} for \(\psi(c) = \abs{c}\) are equivalent, by a representer theorem; see, e.g.~\cite{Bredies:2019,Representer:2019}.
Finally, for more precise characterizations of the variation space for homogeneous nonsmooth activation functions with \(\delta=0\) we refer to~\cite{ongie2019function,PaNo:2023} for the ReLU case and to~\cite{parhi2021neural,ReLUkVariation:2023,RKBS:2023} for \(s\geq 2\).

\subsubsection{Characterization of functions.}
\label{sec:exact_representation}
The functions in the space \(\V\) can be further characterized, and exact representation formulas
\[
f(x) = \NN[\de\mu_f](x) = \int_{\R^d}  \varphi(x;\omega) \de \mu_f(\omega)
\]
with can be given for regular functions, including all infinitely smooth compactly supported functions.
For this, we introduce the Radon transform
\[
\Rad[f](a,b) = \int_{a\cdot x + b = 0} f(x) \de x
\]
and its (formal) adjoint
\[
\Rad^*[\chi](x) = 
\int_{\mathbb{S}^{d-1}} \chi(a,-a\cdot x) \de a .
\]
Note that with respect to the common parameterization of the hyperplane \(a\cdot x = p\), we have chosen the parameter \(b=-p\), since then
\[
\sigma(a\cdot x + b) = \Rad^*[\widehat{\sigma}_{a,b}](x),
\quad\text{ where } \widehat{\sigma}_{a,b}(a',b')
= \sigma(b - b') \delta_{a}.
\]
We note that the above formulas are only suggestive on a formal level and their rigorous functional analytic interpretation requires extending the Radon transform and its adjoint to spaces of distributions; see, e.g., \cite{ongie2019function,parhi2021neural,RKBS:2023}.
This is done by extension by duality of the inner product on \(L^2(\R^d)\) resp.\ \(L^2(\Omega)\)  with \(\Omega =\S^{d-1} \times \R\).
Moreover, we note that Radon transforms of functions are even, i.e.\ \(\mathcal{R}[f](a,b) = \mathcal{R}[f](-a,-b)\), and odd functions lie in the kernel of its adjoint.
On Radon space, we also introduce the differential \(\partial_b\) in the offset variable and we note that \((-\partial_b)^s \widehat{\sigma}_{a,b}(a',b') = \delta_{a,b}\) holds for \(\sigma = \sigma_s\).
%We will denote the extended duality pairings on those spaces by \(\pair{\cdot,\cdot}\).
To invert the adjoint Radon transform, 
we also introduce the operator \(\Lambda_b\) as the square root of the one dimensional negative fractional Laplacian
\[
\Lambda_b = (-\partial^2_{b})^{1/2}
= \Four_b^{-1} (\abs{\xi} \Four_b (\cdot)),
\]
where \(\Four_b\) is the one-dimensional isometric Fourier transform with respect to \(b\)
%, and the \(d\)-dimensional fractional Laplacian
%\[
%(-\Lap)^{1/2}
%= \Four^{-1} (\abs{\xi} \Four (\cdot)),
%\]
and the Radon inversion formulas~\cite{helgason2011integral} for even rapidly decaying functions on the Radon domain
\begin{equation}
\label{eq:radon_inversion}
%c_d (-\Lap)^{(d-1)/2} \Rad^* \Rad f = f,
%\quad 
c_d \Lambda_b^{d-1} \Rad \Rad^* \chi = \chi
\end{equation}
%for \(1/c_d = (4\pi)^{(d-1)/2} \Gamma(d/2)/\Gamma(1/2)\).
with \(1/c_d = 2 (2 \pi)^{d-1}\).
Thus, the inverse of the Radon transform and its adjoint are \(\Rad^{-1} = c_d \Rad^* \Lambda_b^{d-1}\) and \(\Rad^{-*} = c_d \Lambda_b^{d-1} \Rad\) (for even functions on the Radon domain).

\subsubsection{Characterization of networks:}
First, consider the case \(\delta = 0\) and recall the following result.

\begin{theorem}[\cite{ongie2019function,parhi2021neural,RKBS:2023}]
\label{thm:exact_rep}
For sufficiently smooth \(f \in \V\) (e.g., \(f \in C_c^{d+s}(\R^d)\)) we define:
\begin{equation}
\label{eq:representation}
c_{f}(a,b) = c_d (-\partial_b)^{s} \Lambda^{d-1}_b \Rad[f](a,b)
%= c_d (-\partial_b)^{s} \Rad[(-\Lap)^{(d-1)/2} f](a,b)
= (-\partial_b)^{s}  \Rad^{-*}[f](a,b),
\end{equation}
which is an even function for \(s\) even, and an odd function for \(s\) odd. Together with a polynomial \(p_{f}\) of maximal degree \(s-1\) the function \(f\) has the unique decomposition
\[
f(x) = p_{f}(x) + \NN[c_{f}(a,b) \de a \de b](x),
\]
where the polynomial and the coefficient function are uniquely determined (in the space of even or odd measures). 
\end{theorem}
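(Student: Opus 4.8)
The plan is to derive the representation \eqref{eq:representation} from the classical Radon inversion formula \eqref{eq:radon_inversion}, exploiting the two structural identities already recorded: that a single feature is the adjoint Radon transform of a ``smeared Dirac mass'', $\varphi(x;(a,b)) = \Rad^*[\widehat{\sigma}_{a,b}](x)$, and that $\sigma_s$ is a one-dimensional fundamental solution of $(-\partial_b)^s$, so that applying $(-\partial_b)^s$ converts $\widehat{\sigma}_{a,b}$ into the genuine Dirac mass $\delta_{(a,b)}$. Concretely, for a function $g(b)$ that decays together with its first $s-1$ derivatives, Taylor's formula with integral remainder (the polynomial part at $b\to-\infty$ dropping out) gives $g = (-1)^s\,\sigma_s \ast_b (-\partial_b)^s g$, which is the scalar mechanism behind the whole argument.

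For existence: since $f \in C_c^{d+s}(\R^d)$ with $\supp f \subset B_R$, the Radon transform $\Rad[f]$ is even in $(a,b)$, supported in $\{\abs{b}\le R\}$, and $C^{d+s}$ in $b$ because each $\partial_b$ corresponds to one directional derivative of $f$. Applying $\Rad^{-1} = c_d\,\Rad^*\Lambda_b^{d-1}$ from \eqref{eq:radon_inversion} gives $f = c_d\,\Rad^*\big[\Lambda_b^{d-1}\Rad[f]\big]$ pointwise. Set $g := \Lambda_b^{d-1}\Rad[f]$; for $d$ odd this is a local differential operator applied to $\Rad[f]$ and hence still supported in $\abs{b}\le R$, while for $d$ even it equals $\partial_b^{d-2}$ composed with the Hilbert transform of $\partial_b\Rad[f]$, which — using that $\partial_b^{d-1}\Rad[f]$ integrates to zero — decays like $O(\abs{b}^{-d})$ together with all its $b$-derivatives. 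In either case $g$ and its first $s$ derivatives decay fast enough that $g = (-1)^s\,\sigma_s \ast_b (-\partial_b)^s g$, and since $(-\partial_b)^s g = c_f/c_d$ by definition we get $f = (-1)^s\,\Rad^*[\sigma_s \ast_b c_f]$. Writing out $\Rad^*$ and substituting $\sigma_s(-t) = (-1)^{s-1}\big[t^{s-1}/(s-1)! - \sigma_s(t)\big]$ from \eqref{eq:ridge_to_polynomial} at $\delta=0$, the $\sigma_s$-contribution reassembles $\NN[c_f\,\de a\,\de b](x)$ with total sign $+1$, while the $t^{s-1}$-contribution becomes, after integration in $(a,b)$, a polynomial $p_f(x)$ of degree $\le s-1$; all integrals converge absolutely because $c_f = O(\abs{b}^{-d-s})$. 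This is the asserted decomposition.

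The parity of $c_f$ is immediate: $\Rad[f]$ is even, $\Lambda_b^{d-1}$ preserves evenness, and each $\partial_b$ anticommutes with the reflection $(a,b)\mapsto(-a,-b)$, so $c_f(-a,-b) = (-1)^s c_f(a,b)$, even for $s$ even and odd for $s$ odd. For uniqueness in the space of even (resp.\ odd) measures, suppose two such decompositions coincide; then $\mu := \mu_1-\mu_2$ has the prescribed parity and $\NN[\de\mu]$ is a polynomial of degree $\le s-1$. Since $\NN[\de\mu] = \Rad^*[\sigma_s \ast_b \mu]$ and the $x$-Fourier transform of a ridge function $\sigma_s(a\cdot x+b)$ is concentrated on the ray $\R a$ with radial profile $\sim\abs{r}^{-s}$, the transform of $\NN[\de\mu]$ agrees off the origin with the directional $b$-Fourier transform of the parity-symmetrization of $\mu$; a polynomial being supported at the origin, this symmetrization vanishes, and the prescribed parity then forces $\mu = 0$ and hence $p_1 = p_2$. (Equivalently, apply $(-\partial_b)^s$ and invoke injectivity of $\Rad^*$ on even functions on the Radon domain.)

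I expect the main obstacle to be the regularity and decay bookkeeping for \emph{even} $d$, where $\Lambda_b^{d-1}$ is genuinely nonlocal and destroys the compact support in $b$, so that the scalar identity $g = (-1)^s\,\sigma_s \ast_b (-\partial_b)^s g$ and the Radon inversion must be justified as absolutely convergent integrals (or through a careful distributional duality pairing), and — relatedly — pinning down uniqueness \emph{exactly} within the even/odd class: since $\varphi(x;(-a,-b))$ and $(-1)^s\varphi(x;(a,b))$ differ by a degree-$(s-1)$ polynomial (again \eqref{eq:ridge_to_polynomial}), the representing measure is only determined modulo this symmetry together with a compensating change of $p_f$, which is precisely why the parity constraint is needed to recover full uniqueness.
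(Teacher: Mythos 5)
The paper does not prove this theorem; it is quoted with citations to \cite{ongie2019function,parhi2021neural,RKBS:2023}, so there is no in-paper argument to compare against. Your reconstruction follows essentially the same route as those references: existence via the Radon inversion formula \eqref{eq:radon_inversion} combined with the one-dimensional identity \(g = (-1)^s\,\sigma_s \ast_b (-\partial_b)^s g\) (valid thanks to the \(O(\abs{b}^{-d})\) decay of \(\Lambda_b^{d-1}\Rad[f]\), which for even \(d\) indeed needs all moments of order \(< d-1\) of \(\partial_b^{d-1}\Rad[f]\) to vanish, not just the zeroth one you mention), and uniqueness via the parity decomposition of measures together with the fact that the wrong-parity part of \(\mu\) contributes only a degree-\((s-1)\) polynomial through \eqref{eq:ridge_to_polynomial}, the remaining injectivity being the Fourier/Radon null-space argument of \cite{ongie2019function}. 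Two minor remarks: with your decay estimates the defining integrals converge absolutely, and then the reflection symmetry \(c_f(-a,-b)=(-1)^s c_f(a,b)\) actually makes your polynomial correction vanish identically, which is consistent with (and slightly stronger than) the stated decomposition; and the uniqueness sketch, while correct in spirit, would in a full write-up require the tempered-distribution framework to justify taking the \(x\)-Fourier transform of \(\NN[\de\mu]\) and exchanging it with the integral against \(\mu\), which is precisely the technical content of the cited works.
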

\begin{remark}
The polynomial can be written as \(p_{f}(x) = \NN[c_{p}(a,b)\de a \de b]\) for an infinite number of odd (for \(s\) even) or even (for \(s\) odd) functions. Moreover, the representation formula can also be extended to all functions in \(\V\), by extending the involved operators to appropriate spaces of distributions.
\end{remark}

For the smooth versions, we recall that the Radon transform of radial functions is independent of the variable \(a\) and therefore there exists a unique radial \(\eta_\delta(x)\) in the data space associated to \(\widehat{\eta}_\delta[b]\) through the inverse Radon transform, i.e.,
\begin{equation}
\label{eq:radial_mollifier}
\widehat{\eta}_\delta(b) = \Rad[\eta_\delta](b).
\end{equation}
In fact, taking the adjoint Radon transform and using the Radon inversion formula~\eqref{eq:radon_inversion} we have
\[
\eta_\delta
%= (1/c_d) (-\Lap)^{(d-1)/2} \Rad^*[\widehat{\eta}_\delta]
= c_d \Rad^*[ \Lambda_b^{d-1} \widehat{\eta}_\delta].
\]
%Now, if \(c(a,b)\) is some sufficiently regular even function on \(Z\), we have
%\[
%\Rad[\eta_\delta * \Rad^{-1}[c]]
%= \Rad[\eta_\delta] *_b c
%= \widehat{\eta}_\delta *_b c
%\]
Then, for a smoothed version of \(f\) we
use the commutation of the Radon transform and its adjoint with convolutions, i.e.
\[
\Rad [\eta_\delta * f]
= \Rad [\eta_\delta] *_b \Rad [f]
= \widehat{\eta}_\delta *_b \Rad [f]
= \int_{\R^d} \widehat{\eta}_{\delta}(a\cdot x + b) f(x) \de x.
\]
This simple insight will not allow us to derive an exact representation for a given \(f\) using the network \(\NN_{\delta}\), which is not our goal. However, it will easily allow us to exactly represent a smooth approximation of \(f\).

\begin{corollary}\label{cor:exact_rep}
For sufficiently smooth \(f \in \V\) as in Theorem~\ref{thm:exact_rep}, we have
\[
f_\delta(x) :
 = \eta_\delta * f
 = p_{f,\delta}(x) + \NN_{\delta}[c_{f}(a,b) \de a \de b](x),
\]
where the polynomial and the coefficient function are uniquely determined (in the space of even or odd measures). Moreover, the same function can be exactly represented with the network \(\NN\) using a smoothed density
\[
\NN_{\delta}[c_{f,s}(a,b) \de a \de b](x)
=\NN[ c_{f,\delta}(a,b) \de a \de b](x)
\quad\text{where } c_{f,\delta} = \widehat{\eta}_\delta *_b c_{f}.
\]
The smoothed density \(c_{f,\delta}\) has the direct representation
\begin{equation}
\label{eq:representation_delta}
c_{f,\delta}(a,b)
= \int_{\R^d} \widehat{\psi}_{s,\delta}(a\cdot x + b) f(x) \de x,
\quad\text{where } \widehat{\psi}_{s,\delta} = c_d (-\partial_b^{s}) \Lambda_b^{d-1} \widehat{\eta}_\delta.
\end{equation}
Finally the doubly smoothed function \(f_{\delta,2} = \eta_\delta *\eta_\delta * f\) is exactly represented with the density \(c_{f,\delta}\) and the network \(\NN_{\delta}\).
\end{corollary}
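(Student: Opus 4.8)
The plan is to derive the whole corollary from a single computational identity used together with Theorem~\ref{thm:exact_rep}. The identity is that convolving the ridge function $x\mapsto\sigma_s(a\cdot x+b)$ in $\R^d$ with the radial kernel $\eta_\delta$ from~\eqref{eq:radial_mollifier} returns the mollified ridge function $x\mapsto\sigma_{s,\delta}(a\cdot x+b)$. This is seen by slicing the $\R^d$-convolution integral along the affine hyperplanes $\{\,a\cdot y=t\,\}$: the slice integral is $\Rad[\eta_\delta](a,\cdot)$, which is independent of $a$ and equals $\widehat\eta_\delta$ by~\eqref{eq:radial_mollifier} (using that $\widehat\eta_\delta$ is even), and reassembling the slices gives $\sigma_s*\widehat\eta_\delta=\sigma_{s,\delta}$. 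Together with the elementary rewriting $\sigma_{s,\delta}(a\cdot x+b)=\int_\R\sigma_s(a\cdot x+b')\,\widehat\eta_\delta(b-b')\,\de b'$ already recorded just before the corollary, integrating against a density $c(a,b)$ over $\Omega$ and applying Fubini yields, for the admissible class of densities,
\[
\eta_\delta*\NN[c\,\de a\,\de b]=\NN_\delta[c\,\de a\,\de b]=\NN[\widehat\eta_\delta *_b c\;\de a\,\de b].
\]

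\emph{Representation of $f_\delta$.} I would start from $f=p_f+\NN[c_f\,\de a\,\de b]$ (Theorem~\ref{thm:exact_rep}) and convolve with $\eta_\delta$; the identity above turns the network term into $\NN_\delta[c_f\,\de a\,\de b]$, and $p_{f,\delta}:=\eta_\delta*p_f$ is again a polynomial of degree $\le s-1$ because $\eta_\delta$ is a probability density with vanishing first moment (by radiality), so $f_\delta=p_{f,\delta}+\NN_\delta[c_f\,\de a\,\de b]$. The second equality in the display rewrites this as $\NN[c_{f,\delta}\,\de a\,\de b]$ with $c_{f,\delta}=\widehat\eta_\delta*_b c_f$, which lies in the same even/odd symmetry class as $c_f$ since $\widehat\eta_\delta$ is even. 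Uniqueness follows from $\NN_\delta[\mu]=\NN[\widehat\eta_\delta*_b\mu]$: injectivity of $\NN$ on even/odd densities modulo degree-$(s-1)$ polynomials (Theorem~\ref{thm:exact_rep}) together with injectivity of $\widehat\eta_\delta*_b$ — valid because $\Four_b\widehat\eta_\delta$ has no zeros for the $\operatorname{sech}^2$ and softplus mollifiers — transfers uniqueness to the $\NN_\delta$-representation.

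\emph{Direct formula and double smoothing.} Since $\partial_b$, $\Lambda_b$ and convolution in $b$ are all Fourier multipliers in $b$ they commute, so $c_{f,\delta}=\widehat\eta_\delta*_b\big(c_d(-\partial_b)^s\Lambda_b^{d-1}\Rad[f]\big)=c_d(-\partial_b)^s\Lambda_b^{d-1}\big(\widehat\eta_\delta*_b\Rad[f]\big)=c_d(-\partial_b)^s\Lambda_b^{d-1}\Rad[f_\delta]$, and $\Rad[f_\delta](a,b)=\int_{\R^d}\widehat\eta_\delta(a\cdot x+b)f(x)\,\de x$ by the commutation identity for the Radon transform stated before the corollary. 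Pulling the $b$-operators $c_d(-\partial_b)^s\Lambda_b^{d-1}$ under the $x$-integral — they act only on the translate $b\mapsto\widehat\eta_\delta(a\cdot x+b)$ — produces $\widehat\psi_{s,\delta}(a\cdot x+b)$ inside the integral, which is exactly~\eqref{eq:representation_delta}. For the last assertion, I would apply the first part to $f_\delta$ in place of $f$: convolving $f_\delta=p_{f,\delta}+\NN[c_{f,\delta}\,\de a\,\de b]$ once more with $\eta_\delta$ and using the identity gives $f_{\delta,2}=(\eta_\delta*p_{f,\delta})+\NN_\delta[c_{f,\delta}\,\de a\,\de b]$, with $\eta_\delta*p_{f,\delta}$ again a degree-$\le s-1$ polynomial.

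\emph{Main obstacle.} The content above is formal; the real work is justifying the interchanges of integration — swapping the $\R^d$-convolution with the dictionary integral over $\Omega$, and swapping the nonlocal operator $\Lambda_b^{d-1}$ with the $\R^d$-integral defining $\Rad[f_\delta]$ — and checking that $\eta_\delta$ decays fast enough for $\eta_\delta*\sigma_s(a\cdot x+b)$ to make sense despite the polynomial growth of $\sigma_s$. These are controlled by the standing hypotheses: $f\in C_c^{d+s}(\R^d)$ makes $\Rad[f]$ smooth and rapidly decaying on the Radon domain (so $c_f$ decays fast in $b$ uniformly in $a$, which is what the $\Omega$-integrals need), while for the sigmoid and softplus cases $\widehat\eta$ is essentially Schwartz, so $\Lambda_b^{d-1}\widehat\eta_\delta$ and $\widehat\psi_{s,\delta}$ are well defined, smooth and rapidly decaying, and every interchange can be justified on the Fourier side. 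I expect this measure-theoretic bookkeeping, rather than any conceptual point, to be the bulk of a complete proof; the structural content is entirely the convolution identity together with Theorem~\ref{thm:exact_rep}.
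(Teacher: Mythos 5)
Your proposal is correct and follows essentially the same route the paper intends: the paper leaves the corollary's proof implicit, resting on exactly the ingredients you use — the relation \(\NN_\delta=\eta_\delta*\NN\) (ridge functions mollified in \(x\) by the radial kernel equal ridge functions with \(\sigma_{s,\delta}\), equivalently mollification in the offset \(b\)), the commutation \(\Rad[\eta_\delta*f]=\widehat{\eta}_\delta*_b\Rad[f]\) stated just before the corollary, the representation of Theorem~\ref{thm:exact_rep}, and commuting the Fourier multipliers \((-\partial_b)^s\Lambda_b^{d-1}\) with the \(b\)-convolution to obtain~\eqref{eq:representation_delta}. Your added observation that uniqueness for the \(\NN_\delta\)-representation needs injectivity of \(\widehat{\eta}_\delta*_b\) (nonvanishing of \(\Four_b\widehat{\eta}\), true for the \(\operatorname{sech}^2\) kernel) is a point the paper glosses over but is consistent with it.
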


This corollary only allows to represent smooth approximations (blurred versions) of the function \(f\) to be approximated, using the fact that a radial blur filter in spatial domain corresponds to a blur filter in the Radon domain. Our subsequent approach to suggesting exact densities is based on the assumption that \(\delta\) is chosen sufficiently small such that \(f_\delta\) is close enough to \(f\), (e.g., up to the resolution of the available training data). However, this blurring also smooths out any discontinuities in the function \(f\) and its representation measure \(c_{f,\delta}\), allowing us to represent it as a volume integral~\eqref{eq:representation_delta} instead of with a hyper-surface integral as in~\eqref{eq:representation}, which is more difficult to approximate.

\subsection{RKHS for random feature models}
\label{sec:uniform_random_features}
The Hilbert space~\(\H\) introduced in~\eqref{eq:Hilbert_by_L2} is exactly the native Hilbert space associated to the kernel \(k_M\) from section~\eqref{eq:kernel}; see, e.g.,~\cite[Section~10]{Wendland_2004}.
It has the representation through the kernel by definition
\[
\norm{f}_{\H_M}
= \sup \left\{\; \pair{f,g} \;\Big|\;  \norm{g}_{\H_M'} \leq 1 \;\right\},
\]
where
\[
\norm{g}^2_{\H_M'}
= \int_{\R^d} \int_{\R^d} k(x,x') g(x) g(x') \de x \de x'.
\]
For the given activation functions, this can be further characterized, and we introduce the spaces \(\H_{R,s}\) and \(\H_{R,s,\delta}\) associated to the activation functions and distribution above.

For the smooth version with \(\delta > 0\) we simply use that
\(\NN_{\delta} = \eta_\delta * \NN\) to obtain:
\begin{proposition}
\label{prop:delta_Hilber_char}
The space \(\H_{M,s,\delta}\) can be characterized by the equality
\begin{equation}
\label{eq:delta_Hilbert_char}
\H_{M,s,\delta}
= \{\;\eta_\delta * f \;|\; f \in \H_{M,s} \;\},
\end{equation}
where \(\H_{M,s}\) is the space defined for \(\delta=0\).
The associated kernel is given by \(k_{M,s,\delta}(x,x') = [\eta_\delta *_x k_{M,s} *_{x'} \eta_\delta](x,x')\), where  \(k_{M,s}\) is the kernel associated to \(\delta = 0\).
\end{proposition}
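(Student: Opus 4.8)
The plan is to reduce the whole statement to the intertwining identity $\NN_\delta = \eta_\delta * \NN$ recorded just before the proposition, and to track what it does to the objects that define the two reproducing kernels. At the level of a single feature this reads $\eta_\delta * \varphi(\cdot;\omega) = \varphi_\delta(\cdot;\omega)$ for every $\omega = (a,b)\in\Omega$, where $\varphi_\delta(x;\omega) := \sigma_{s,\delta}(a\cdot x+b)$: the $\R^d$-convolution of the radial $\eta_\delta$ with the ridge function $y\mapsto\sigma_s(a\cdot y + b)$ is again a ridge function. I would verify this by a slice computation — writing $y = s\,a + y^\perp$ with $y^\perp\perp a$ and integrating out $y^\perp$, the inner integral is the integral of $\eta_\delta$ over a hyperplane with normal $a$, hence equals $\Rad[\eta_\delta] = \widehat\eta_\delta$ evaluated at the appropriate offset; using $\norm{a}=1$ and the evenness of $\widehat\eta$, the result collapses to $[\sigma_s *\widehat\eta_\delta](a\cdot x + b) = \sigma_{s,\delta}(a\cdot x + b)$ — and state it as a one-line lemma (it is essentially the convolution-commutation fact used to establish Corollary~\ref{cor:exact_rep}).

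Integrating this identity against $c\,\de M$ for $c\in L^2(\Omega,\de M)$ gives $\NN_\delta[c\de M] = \eta_\delta * \NN[c\de M]$; the interchange of integrations is legitimate because $\eta_\delta$ is Schwartz while $\abs{\sigma_s(a\cdot y + b)}\leq C(1+\norm{y})^{s-1}$ uniformly for $\omega$ in the (bounded) effective support of $M$, so the triple integral converges absolutely. The set equality $\H_{M,s,\delta} = \{\eta_\delta * f \mid f\in\H_{M,s}\}$ is then immediate from the feature-space definitions $\H_{M,s} = \{\NN[c\de M]\mid c\in L^2(\Omega,\de M)\}$ and $\H_{M,s,\delta} = \{\NN_\delta[c\de M]\mid c\in L^2(\Omega,\de M)\}$, since the same coefficient functions $c$ parameterize both spaces and the two parameterizations differ by the single blur $\eta_\delta *$. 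Substituting $\varphi_\delta(x;\omega) = \int\eta_\delta(x-y)\varphi(y;\omega)\de y$ into both slots of the defining integral~\eqref{eq:kernel} and pulling the two spatial convolutions out of the $\omega$-integral (again by absolute convergence) yields $k_{M,s,\delta}(x,x') = \int\int\eta_\delta(x-y)\eta_\delta(x'-y')k_{M,s}(y,y')\de y\de y' = [\eta_\delta *_x k_{M,s} *_{x'}\eta_\delta](x,x')$.

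To turn the set equality into a genuine isometric identification of RKHS — which is what is implicit in ``characterized,'' and what the kernel formula forces — I would check that $f\mapsto\eta_\delta * f$ is isometric from $\H_{M,s}$ onto $\H_{M,s,\delta}$: since $\norm{g}_{\H_{M,s,\delta}}$ is the infimum of $\norm{c}_{L^2(\de M)}$ over $c$ with $\eta_\delta * \NN[c\de M] = g$, and, for $g = \eta_\delta * f$, this constraint is equivalent to $\NN[c\de M] = f$ whenever $\eta_\delta *$ is injective on $\H_{M,s}$, the infimum equals $\norm{f}_{\H_{M,s}}$. Injectivity of the blur is the single genuinely analytic point and the main obstacle to full rigor: it requires the $\R^d$-Fourier transform of the radial $\eta_\delta$ to vanish nowhere, which by the Fourier slice theorem amounts to the one-dimensional Fourier transform of $\widehat\eta$ vanishing nowhere — true for the sigmoid bump, whose transform is (up to normalization) $\xi\mapsto\xi/\sinh(c\,\xi)$ for a suitable $c>0$, true for the softplus bump, and otherwise imposable as a mild standing hypothesis on $\widehat\eta$ — combined with the fact that elements of $\H_{M,s}$ grow at most polynomially, hence are tempered distributions, on which convolution by a Schwartz function of nonvanishing symbol is injective. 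If one insists on viewing the spaces as spaces of functions on $B_R$ only, a little extra care is needed, since injectivity of the blur is a statement on all of $\R^d$; working throughout with the global network representations sidesteps this.
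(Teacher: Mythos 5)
Your proposal is correct and follows the same route as the paper, which proves the proposition in one line by invoking the intertwining identity \(\NN_\delta = \eta_\delta * \NN\) (i.e.\ \(\eta_\delta * \varphi(\cdot;\omega) = \sigma_{s,\delta}(a\cdot{}+b)\), a consequence of \(\Rad[\eta_\delta]=\widehat\eta_\delta\)); your slice computation, the kernel double-convolution, and the integrability remarks simply make explicit the details the paper leaves implicit. The additional isometry/injectivity discussion goes beyond what the proposition asserts (set equality plus kernel formula) and is a reasonable, correctly argued refinement, but is not needed for the statement as written.
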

This shows that whatever regularity is imposed on \(f\) though its inclusion in the Hilbert space associated to \(\delta = 0\), is only improved by an additional convolution with the radial mollifier \(\eta_\delta\) introduced in~\eqref{eq:radial_mollifier}.
%Also, it shows that for small \(\delta\), a feature model will behave similar to the case \(\delta = 0\).

\subsubsection{Kernel associated to uniform distribution:}
For the distribution \(M_{R}\) and ridge splines with \(\delta = 0\) the kernel is given as
\[
k_{R,s}(x,x')
= \frac{1}{2R \abs{\mathbb{S}^{d-1}}}\int_{\mathbb{S}^{d-1}} 
\int_{-R}^R \sigma_s(a\cdot x + b)\sigma_s(a\cdot x' + b) \de b \de a .
\]
Explicit expressions for this kernel were derived in~\cite{bach2023relationship} extending the classical case \(s = 1\), which is the Heaviside activation function~\cite[Section~3.3]{leroux07a}. For \(\norm{x}\) and \(\norm{x'}\) smaller than \(R\), we have
\begin{equation}
\label{eq:kernel_HS}
k_{R, s}(x,x')
= P_{R,s}(x,x') - c_{R,s}\norm{x-x'}^{1+2(s-1)},
\end{equation}
with a constant \(c_{R,s}>0\) and a (positive-semidefinite) polynomial kernel \(P_{R,s}(\norm{x}^2,\norm{x'}^2,\norm{x-x'}^2)\) involving the \((s-1)\)-degree polynomial \(P_{R,s}\), which are not of further interest in the following.
This kernel is closely related to the kernel \(k(x,x') = -\norm{x-x'}^{1+2(s-1)}\), a conditionally positive definite kernel related to polyharmonic splines; see, e.g., \cite[Section~8, Corollary~8.18]{Wendland_2004}.

The associated native Hilbert space is related to the (fractional) Sobolev space \(H^{(d-1)/2 + s}(\R^d)\),
which can be endowed with the semi-norm
\[
\abs{f}^2_{H^{(d-1)/2 + s}}
=
\int_{\R^d} \abs{\widetilde{f}(\xi)}^2 \abs{\xi}^{d-1 + 2s} \de \xi
%=
%\int_{\R^d} \norm{\widetilde{\nabla f}(\xi)}^2 \abs{\xi}^{d-1} \de \xi
\]
where \(\widetilde{f} = \Four[f]\) is the Fourier transform of \(f\). Note, that in the case of odd \(d\), this is the Beppo-Levi-space with square-integrable partial derivatives of order \((d-1)/2 + s\); cf.~\cite[Section~10.5]{Wendland_2004}.
However, the representation~\eqref{eq:kernel_HS} is only valid for \(x, x' \in B_R\). Since we assume that all data points are contained in this ball \(x \in X \subset B_R\), this is enough to obtain a characterization of the space.
In \cite[Proposition~4]{bach2023relationship} the following result can be found, which is valid for all \(s\) and based on the explicit characterization of the kernel~\eqref{eq:kernel_HS}.
\begin{proposition}
We have the norm equivalence
\[
\H_{R,s}
\sim
H^{(d-1)/2 +s}(B_R)
= \{\, f'\rvert_{B_R} \;|\; f' \in H^{(d-1)/2 + s}(\R^d) \,\},
\]
which is the restriction of (fractional) Sobolev functions to the ball of radius \(R\) endowed with the norm
\[
\norm{f}^2_{H^{(d+1)/2 + s}(B_R)}
= \inf \left\{
c'\norm{f'}_{L^2(B_R)}^2
+
\abs{f'}^2_{H^{(d-1)/2 + s}}
\;\Big|\; f'\rvert_{B_R} = f
\right\},
\]
for some suitable \(c'>0\).
\end{proposition}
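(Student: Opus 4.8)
The plan is to work entirely from the explicit kernel formula~\eqref{eq:kernel_HS}: for $x,x'\in B_R$,
\[
k_{R,s}(x,x') = P_{R,s}(x,x') - c_{R,s}\norm{x-x'}^{2s-1},
\qquad c_{R,s}>0,
\]
where $P_{R,s}$ is a positive-semidefinite polynomial kernel of degree $s-1$. Since $k_{R,s}$ is itself positive semidefinite (it is an average of products of feature functions) while $-c_{R,s}\norm{x-x'}^{2s-1}$ is only \emph{conditionally} positive definite, I would first decompose the RKHS using Aronszajn's sum theorem, $\H_{R,s} = \H_{P_{R,s}} + \H_{-c_{R,s}\norm{\cdot}^{2s-1}}$, with the norm given by the associated infimal convolution, then identify each summand and recognize the sum as the restricted Sobolev space.

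For the polyharmonic part: because $2s-1$ is an odd (hence non-even) positive integer, the function $z\mapsto(-1)^{s}\norm{z}^{2s-1}$ is conditionally positive definite of order $s$ on $\R^d$, and the classical Duchon computation (see, e.g., \cite[Sections~10.3--10.5]{Wendland_2004}) identifies its native space on $\R^d$, taken modulo polynomials of degree $\le s-1$, with the homogeneous space carrying the seminorm $\abs{f}^2_{H^{(d-1)/2+s}} = \int_{\R^d}\abs{\widetilde f(\xi)}^2\abs{\xi}^{d-1+2s}\de\xi$; note that $d-1+2s$ is precisely the exponent appearing in the statement. The sign and the positive constant $c_{R,s}$ are consistent with this because $k_{R,s}$ is globally positive semidefinite, so $P_{R,s}$ must absorb exactly the finite-rank polynomial obstruction to positivity of the bare polyharmonic term.

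Next I would pass from $\R^d$ to the ball. Since $B_R$ is a bounded Lipschitz (indeed smooth) domain, the Stein extension operator shows that $\{\,f'|_{B_R} : f'\in H^{(d-1)/2+s}(\R^d)\,\}$ with the infimum norm is norm-equivalent to the intrinsic Sobolev space $H^{(d-1)/2+s}(B_R)$. On the bounded domain, $P_{R,s}$ reproduces exactly the space of polynomials of degree $\le s-1$ that the polyharmonic kernel quotients out, so adding it back converts the ``modulo polynomials'' native space into a genuine RKHS; via a Poincar\'e / norm-equivalence argument on $B_R$, the finitely many recovered polynomial directions together with the homogeneous seminorm control the full $L^2(B_R)$ norm, which is what produces the term $c'\norm{f'}_{L^2(B_R)}^2$ inside the infimum in the statement. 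Combining this with the sum theorem yields $\H_{R,s}\sim H^{(d-1)/2+s}(B_R)$ with the stated norm.

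The step I expect to be the main obstacle is pinning down the interaction between the conditional positive definiteness of the polyharmonic part and the polynomial correction $P_{R,s}$: one must verify that the range of $\H_{P_{R,s}}$ is \emph{exactly} the polynomial space of degree $\le s-1$, so that the Aronszajn sum lands precisely on the inhomogeneous Sobolev space, neither missing its lower-order part nor double-counting polynomials. This is forced by $k_{R,s}$ being positive semidefinite and the polyharmonic term being conditionally positive definite of order exactly $s$, but making it rigorous requires either inspecting the explicit $P_{R,s}$ produced in the derivation of~\eqref{eq:kernel_HS} or a general argument about minimal polynomial corrections of conditionally positive definite kernels. Everything else — the Fourier-side seminorm computation for $\norm{z}^{2s-1}$, the extension theorem for the ball, and the Poincar\'e inequality — is classical; a self-contained treatment is \cite[Proposition~4]{bach2023relationship}, and the case $s=1$ is already in \cite[Section~3.3]{leroux07a}.
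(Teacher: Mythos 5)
The paper itself offers no proof of this proposition: it is imported directly from \cite[Proposition~4]{bach2023relationship} with the remark that it is based on the explicit kernel formula~\eqref{eq:kernel_HS}, so there is no in-paper argument to compare your plan against. Your route --- split \(k_{R,s}\) into the positive-semidefinite polynomial part and the polyharmonic part, identify the native (semi-)norm of the latter with \(\int_{\R^d}\abs{\widetilde f(\xi)}^2\abs{\xi}^{d-1+2s}\de\xi\) via the generalized Fourier transform of \(\norm{\cdot}^{2s-1}\), then pass to \(B_R\) by Stein extension and a Poincar\'e-type argument that absorbs the finite-dimensional polynomial directions into the \(L^2(B_R)\) term --- is the standard polyharmonic-spline argument and, as far as one can tell, essentially the route of the cited reference (with \(s=1\) already in \cite[Section~3.3]{leroux07a}). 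So the strategy is the right one.

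Two caveats. First, Aronszajn's sum theorem cannot be invoked literally, since the polyharmonic summand is not a positive-semidefinite kernel; you acknowledge this, but your claim that the sign ``is consistent'' hides a real issue with the paper's display: by Micchelli's theorem \((-1)^s\norm{\cdot}^{2s-1}\) is conditionally positive definite of order \(s\), and positive semidefiniteness of \(k_{R,s}\) (tested on coefficient vectors annihilating all polynomials of degree up to \(2(s-1)\), which kill \(P_{R,s}\)) forces the non-polynomial term to carry exactly the sign \((-1)^s\). A one-dimensional computation for ReLU gives \(k\propto \tfrac{2R^3}{3}+2Rxx'+\tfrac16\abs{x-x'}^3\), so \eqref{eq:kernel_HS} with a fixed minus sign and \(c_{R,s}>0\) is literally correct only for odd \(s\); your argument should state the sign as \((-1)^s\) rather than assert consistency with the displayed formula. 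Second, the step you flag as the main obstacle is indeed where the substance lies, and your sketch defers rather than closes it; note moreover that \(P_{R,s}\) is a degree-\((s-1)\) polynomial in \(\norm{x}^2,\norm{x'}^2,\norm{x-x'}^2\), hence of degree up to \(2(s-1)\) in \(x\), so its RKHS need not be exactly the polynomials of degree at most \(s-1\). Fortunately the norm equivalence does not require that exact identification: it suffices that \(\H_{P_{R,s}}\) is a finite-dimensional space of polynomials and that, on the bounded ball, the homogeneous seminorm together with any norm on a complement of the degree-\((s-1)\) polynomials is equivalent to the full Sobolev norm. Making that interplay rigorous is precisely what the explicit kernel bookkeeping in \cite[Proposition~4]{bach2023relationship} supplies, so your proposal, like the paper, ultimately leans on that reference (or on carrying out the computation) for the decisive step.
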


For \(\delta > 0\) we recall Proposition~\ref{prop:delta_Hilber_char}, and note that this is a strict subspace of this space of inifinitely smooth functions, with norm that is similar to the Sobolev norm for slowly varying functions (varying more slowly than \(\delta\)) and much larger for more radidly varying functions.

\section{Gradient based nonuniform random weight selection}
\label{sec:gradient_nonuniform_random_features}

The question we ask is how to efficiently sample candidate weights
\[
\omega_n = (a_n,b_n) \sim \de M_{\bld{x},\bld{y},\bld{d}}(\omega)
= m_{\bld{x},\bld{y},\bld{d}}(\omega) \de \omega
\]
from a data dependent nonuniform distribution \(m_{\bld{x},\bld{y},\bld{d}}\) that allows for good approximation of the function \(f\) with few samples.
Motivated by Section~\ref{sec:nonuniform_random_features} and Theorem~\ref{thm:exact_rep}, for functions that possess an  exact density \(c_f(a,b)\de a \de b\), the normalized density is an excellent candidate for the sampling distribution
\[
m_{\bld{x},\bld{y},\bld{d}}(a,b)
\approx \frac{1}{m_f} \abs{c_f(a,b)} ,
\quad\text{where } m_f = \int \abs{c_f(a,b)}\de a\de b
\]
However, neither analytic expressions for \(f\) nor \(\mu_f\) are available at the stage of generating the candidate weights. Moreover, for functions that are not exactly representable or do not have a continuous density \(\de \mu_f(a,b) = c_f(a,b)\de a \de b\), this density might be difficult to sample from.
Thus, in the following, we identify simplified approximations of \(c_\delta(a,b)\) based on smoothed versions \(f_\delta\) of \(f\) and corresponding approximate densities (Corollary~\ref{cor:exact_rep}) to sample inner weights. 

\subsection{Approximate densities based on exact representation}
\label{sec:exact_sampling}

We are interested to employ derivative information of the function \(f\) at the data points, and thus we are looking for exact representation formulas that reveal its influence on the exact density. This will be particularly relevant for Heaviside and sigmoidal networks.

We define the averaged gradient function
\[
\widetilde{g}_{s,\delta}(a,b)
= \int_{\R^d} \nabla f(x) \widehat{\psi}_{s-1,\delta}(a\cdot x + b) \de x,
\quad\text{for } (a,b) \in \S^{d-1}\times \R,
\]
which aggregates the gradient of \(f\) over a neighborhood of the hyperplane \(a\cdot x + b = 0\) using an influence factor \(\widehat{\psi}_{s-1,\delta}\).
It might seem intuitive to choose \(\widehat{\psi}_\delta\) to be a bump function around zero, but, motivated by the exact reconstruction formula from Theorem~\ref{thm:exact_rep},
we select a weight function with the following properties:
\[
\widehat{\psi}_{s-1,\delta} \in C_0(\R), \quad
\int_\R \widehat{\psi}_{s-1,\delta}(b) b^k \de b
=
\begin{cases} 
0 &\text{for } 0 \leq k < d + s - 2,\\
1 &\text{for } k = d + s - 2.
\end{cases}
\]
In particular, we choose
\[
\widehat{\psi}_{s-1,\delta}
= c_d  (-\partial_b)^{s-1} \Lambda^{d-1}_b\widehat{\eta}_{\delta}.
\]

\begin{proposition}
\label{prop:exact_integral_density}
Let \(f \in C^{d+s}_c(X)\) for a domain \(X\subset \R^d\).  Then, the exact density 
of the smoothed function \(f_\delta = \eta_\delta * f = \NN[c_\delta]\) is given by
\[
c_{\delta}(a,b)
= a \cdot \widetilde{g}_{\delta}(a,b)
= \int_{\R^d} a\cdot \nabla f(x) \widehat{\psi}_{s-1,\delta}(a\cdot x + b) \de x.
%\quad \text{where } \widehat{\psi}_{s-1,\delta}
%= \Lambda^{d-1} \partial_b^{s-1} \widehat{\eta}_\delta.
\]
\end{proposition}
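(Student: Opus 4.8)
The plan is to read off the explicit density already furnished by Corollary~\ref{cor:exact_rep} and then transfer the offset‑derivative built into it onto a gradient of \(f\) by a single integration by parts in \(x\). After extending \(f\) by zero to an element of \(C^{d+s}_c(\R^d)\), Corollary~\ref{cor:exact_rep} gives — up to the polynomial \(p_{f,\delta}\), which we suppress by the convention fixed in Section~\ref{sec:problem_notation} — the representation \(f_\delta = \eta_\delta * f = \NN[c_\delta]\) with \(c_\delta = c_{f,\delta}\) and the volume‑integral formula~\eqref{eq:representation_delta},
\[
c_\delta(a,b) = \int_{\R^d} \widehat{\psi}_{s,\delta}(a\cdot x + b)\, f(x)\, \de x,
\qquad \widehat{\psi}_{s,\delta} = c_d\,(-\partial_b)^{s}\,\Lambda_b^{d-1}\,\widehat{\eta}_\delta .
\]
Since \(\partial_b\) and \(\Lambda_b\) both act only in the offset variable and commute, the defining formulas immediately yield the factorization \(\widehat{\psi}_{s,\delta} = (-\partial_b)\,\widehat{\psi}_{s-1,\delta}\), i.e.\ \(\widehat{\psi}_{s,\delta}(t) = -\widehat{\psi}_{s-1,\delta}'(t)\), with \(\widehat{\psi}_{s-1,\delta} = c_d\,(-\partial_b)^{s-1}\Lambda_b^{d-1}\widehat{\eta}_\delta\) exactly the influence factor defined before the statement.

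Next I would convert the one‑dimensional derivative of \(t\mapsto\widehat{\psi}_{s-1,\delta}(t)\), evaluated at \(t = a\cdot x + b\), into an \(x\)-gradient: the chain rule gives \(\nabla_x\bigl[\widehat{\psi}_{s-1,\delta}(a\cdot x + b)\bigr] = a\,\widehat{\psi}_{s-1,\delta}'(a\cdot x + b)\), and taking the inner product with \(a\) and using \(\norm{a} = 1\) on \(\S^{d-1}\) yields the key identity \(\widehat{\psi}_{s-1,\delta}'(a\cdot x + b) = a\cdot\nabla_x\bigl[\widehat{\psi}_{s-1,\delta}(a\cdot x + b)\bigr]\). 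Inserting this together with the factorization into the formula for \(c_\delta\) produces
\[
c_\delta(a,b) = -\int_{\R^d}\bigl(a\cdot\nabla_x\,\widehat{\psi}_{s-1,\delta}(a\cdot x+b)\bigr)\,f(x)\,\de x ,
\]
and one integration by parts in \(x\) — with vanishing boundary term, since \(f\) has compact support while \(\widehat{\psi}_{s-1,\delta}\in C_0(\R)\) is bounded and \(C^1\) — gives
\[
c_\delta(a,b) = \int_{\R^d}\widehat{\psi}_{s-1,\delta}(a\cdot x + b)\,\bigl(a\cdot\nabla f(x)\bigr)\,\de x ,
\]
which is exactly \(a\cdot\widetilde{g}_{s,\delta}(a,b) = \int_{\R^d} a\cdot\nabla f(x)\,\widehat{\psi}_{s-1,\delta}(a\cdot x + b)\,\de x\), as claimed.

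I do not expect a genuinely hard step here: once the factorization \(\widehat{\psi}_{s,\delta} = -\partial_b\widehat{\psi}_{s-1,\delta}\) and the ``directional derivative equals offset derivative on the sphere'' identity are in hand, the whole proof is the single integration by parts above. The only points deserving a word of justification are (i) the identification \(c_\delta = c_{f,\delta}\), which is precisely the content of Corollary~\ref{cor:exact_rep} modulo the suppressed polynomial; and (ii) the legitimacy of differentiating the kernel and of the integration by parts, both immediate because \(f\) and \(\nabla f\) are continuous with compact support while \(\widehat{\psi}_{s-1,\delta}\) and \(\widehat{\psi}_{s,\delta}\) are smooth \(C_0\) functions (for the concrete mollifiers considered, e.g.\ \(\widehat{\eta}(t) = \operatorname{sech}(t/2)^2/4\), they even decay exponentially), so all integrals converge absolutely. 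If one wishes, the vanishing‑moment normalization of \(\widehat{\psi}_{s-1,\delta}\) stated before the proposition can also be checked in passing, by pairing \(c_d(-\partial_b)^{s-1}\Lambda_b^{d-1}\widehat{\eta}_\delta\) against \(b^k\) and using the normalization~\eqref{eq:bump_function} of \(\widehat{\eta}\) together with the Fourier‑multiplier form of \(\Lambda_b\), but this is not needed for the statement itself.
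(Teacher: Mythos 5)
Your proposal is correct and follows essentially the same route as the paper's proof: invoke Corollary~\ref{cor:exact_rep} for the volume-integral form of \(c_\delta\), use \(\widehat{\psi}_{s,\delta} = -\partial_b\widehat{\psi}_{s-1,\delta}\) together with \(a\cdot a = 1\) to trade the offset derivative for \(a\cdot\nabla_x\), and integrate by parts with vanishing boundary terms due to compact support. The extra remarks on absolute convergence and the suppressed polynomial are harmless elaborations of what the paper leaves implicit.
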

\begin{proof}
From the exact representation Theorem~\ref{thm:exact_rep} and Corollary~\ref{cor:exact_rep} we have
\[
c_{\delta}(a,b)
= c_d (-\partial_b)^{s} \Lambda^{d-1}_b \Rad[\eta_\delta * f](a,b)
= \int_{\R^d} \widehat{\psi}_{s,\delta}(a\cdot x + b) f(x) \de x.
\]
Now we realize that due to \(a \in \S^{d-1}\) and thus \(a\cdot a =1 \) we have
\[
\widehat{\psi}_{s,\delta}(a\cdot x + b)
= -\partial_b \widehat{\psi}_{s-1,\delta}(a\cdot x + b)
= -a \cdot a \partial_b \widehat{\psi}_{s-1,\delta}(a\cdot x + b)
= -a \cdot \nabla_x \left[\widehat{\psi}_{s-1,\delta}(a\cdot x + b)\right]
\]
and therefore
\[
c_{\delta}(a,b)
= - a \cdot \int_{\R^d} \nabla_x \left[\widehat{\psi}_{s-1,\delta}(a\cdot x + b)\right] f(x) \de x
= a \cdot \int_{\R^d} \widehat{\psi}_{s-1,\delta}(a\cdot x + b) \nabla f(x) \de x,
\]
using integration by parts.
\end{proof}

\begin{remark}
Although this result is valid for any \(s\), in a sense it is most instructive for Heaviside/sigmoidal networks with \(s=1\). 
In this case the averaged gradient is simply the smoothed inverse adjoint Radon transform of the gradient
\[
\widetilde{g}_{1,\delta}
= \widehat{\eta}_\delta *_b \Rad^{-*}[\nabla f]
= \Rad^{-*}[\eta_\delta * \nabla f].
%= \Rad^{-*}[\nabla (\eta_\delta * f)]
%= \Rad^{-*}[\nabla \eta_\delta * f].
\]
\end{remark}

Based on this result, we can suggest a strategy to sample weights according to the unnormalized density
\[
m_{s,\delta}(a,b) = \abs{a\cdot \widetilde{g}_{s,\delta}(a,b)},
\]
which measures how well aligned the averaged gradient over a neighborhood of the hyperplane \(a\cdot x + b\) is to the normal vector \(a\). In practice, we still have to replace the integral over \(\R^d\) by some finite approximation. This is possible only in cases where:
\begin{enumerate}
\item
The exact distribution \(\nu\) associated to the data samples is known and has density \(\de\nu(x) = \rho(x) \de x\),
\item
The function \(f\) is compactly supported on the domain of interest \(X\subset \R^d\) (or, fulfills \(\nabla f(x) \approx 0\) for \(x \not \in X\)),
\item
An upper bound \(m_{s,\delta}(a,b) \leq \bar{m}\) is available.
\end{enumerate}
In that situation, the expression above can be given by the Monte Carlo approximation
\begin{align}
\notag
m_{s,\delta}(a,b)
&= \abs*{\int_X a \cdot \nabla f(x) \frac{\widehat{\psi}_{s-1,\delta}(a\cdot x + b)}{\rho(x)} \, \rho(x)\de x} \\
\label{eq:approximate_density_MC}
&\approx \abs*{ \frac{1}{K} \sum_{k=1}^K a \cdot g_k \frac{\widehat{\psi}_{s-1,\delta}(a\cdot x_k + b)}{\rho(x_k)}}
= m_{s,\delta,K}(a,b),
\end{align}
where we recall that \(g_k \approx \nabla f(x_k)\) is the gradient data.
To obtain samples from this density we can then resort to rejection sampling based on samples from the uniform density from section~\ref{sec:uniform_random_features} and
the acceptance ratio \(0 < m_{s,\delta,K}(a,b) / \bar{m} < 1\).
We refer to~\cite{NonparametricWeight:2013} for a similar approach in the context of bump-shaped activation functions (corresponding to a choice of \(s = 0\), which we do not consider, since the case for \(\delta=0\) does not relate to any activation function, but rather the Dirac delta; see~\cite{sonoda2017neural}). We note that that the requirement 3.\ could be removed by using more sophisticated techniques as rejection sampling techniques such as Markov chain Monte Carlo methods such as the Metropolis-Hastings algorithm; see, e.g.~\cite{neal2012bayesian}.

\subsection{Simplified approximations for sampling densities}
\label{sec:approximate_sampling}

The method based on approximate exact representation has two major drawbacks that make it unattractive in practice, which are caused not by the form of the density, but by how we approximate it in~\eqref{eq:approximate_density_MC} and how we obtain samples from it. 

First, to rewrite the integral as a sum, we need to know that the exact density of the data samples \(\rho\), which usually would not be available (unless the samples happen to be uniform in some set and \(\rho \equiv \text{const}\), for instance).
We could simply replace \(\rho\) by a constant, but this would introduce another approximation error. Second, the formula assumes that \(f\) (its gradient) is approximately compactly supported, which is not the case for functions as~\eqref{eq:example_1d2d} that are constant in certain directions, despite having a simple representation in terms of a shallow network that is accurate on the region of interest. We will see that applying the density~\eqref{eq:approximate_density_MC} to such functions leads to artifacts in the approximate density caused by truncating the integral to the data domain.
In general, we aim to construct nonuniform densities \(\de M(a,b)\) with the properties that
\begin{equation}
\label{eq:support_condition_networks}
\supp M(a,b)
\subseteq \{\, (a, b) \;|\; a \in \operatorname{range} \nabla f,\, b \in -a \cdot X_\delta \,\}.
\end{equation}
The first condition ensures that if \(f\) is constant in direction \(d\), i.e., \(d\cdot \nabla f \equiv 0\), we will only generate weights \(a\) with \(d\cdot a = 0\), such that the corresponding network has the same property.
The second property ensures that no offset parameters associated with hyperplanes not intersecting a neighborhood of the support of the data \(X_\delta\) are generated, e.g., \(X_\delta = \{\,x'\;|\; x \in X, \norm{x-x'} \leq \delta\,\}\).

Second, the use of rejection sampling comes with potentially big drawbacks. We have seen already in section~\ref{sec:intro_illustrate} that optimal densities can concentrate on small subsets of the possible parameter set \(\S^{d-1} \times \R\), which leads to a very large rejection probability, if the weights are first uniformly sampled. We would prefer to have a more efficient way of sampling weights.
Possible strategies motivated by the reconstruction formula will be discussed in the following.
We present the two most commonly used types of activation function first \(s \in \{\,1,2\,\}\) and do not present the derivations for activation functions corresponding to larger \(s\), which are not often used in practice and would require higher order derivatives.

\subsection{Heaviside and sigmoidal networks}
First, we consider the case of a sigmoidal activation function~\(\sigma_{1,\delta}\).
For \(\delta > 0\), the gradient of the finite network is given simply as:
\begin{equation}
\label{eq:grad_network_1}
\nabla f(x)
\approx \nabla \NN_{\delta}(x;\bld{A},\bld{b},\bld{c})
= \sum_{n=1}^N c_n \widehat{\eta}_\delta(a_n\cdot x + b_n) \, a_n,
\end{equation}
where \(\widehat{\eta}_\delta(a_n\cdot x + b_n)\) is the bump function introduced in~\eqref{eq:bump_function}.
The fundamental idea of the simplified densities is that the gradient of the network is given by a linear combination of the inner weights \(a_n\), weighted by coefficients related to a discrete version of the adjoint Radon transform.
Instead of inverting that transform, we simply build candidates for the weights directly based on the gradient data.

The construction will lead to a method that can be interpreted as a localized version of a global method associated to ``active subspaces'', which we briefly outline first.

\subsubsection{Method of active subspaces.}
\label{sec:AS}
A simple way to ensure the property~\ref{eq:support_condition_networks} is based on the method of active subspaces~\cite{ASM:2014,constantine2015computing}.
We consider the discrete data case, and simply sample an average combination of all gradients multiplied by i.i.d.\ normal distributed coefficient and normalize it to the sphere:
\[
\widetilde{g}
= \sum_{k=1}^K \xi_k g_k, \quad\text{where } g_k = \nabla f(x_k), \;\xi_k \text{ i.i.d.\ } \mathcal{N}(0,1).
\]
By the properties of Gaussian random variables, we can perform a singular value decomposition of the gradient matrix
\[
G = [g_k]_{k=1,\ldots,K} = V \Sigma W,
\quad V \in \R^{d\times r}, \Sigma \in \R^{r\times r}, V \in \R^{r\times K},
\]
where \(r = \rank K\) is the rank of \(G\),
and generate the same distribution by
\[
\widetilde{g} = V \xi
\quad\text{where } \xi \text{ is } \mathcal{N}(0,\Sigma).
\]
Alternatively, \(\widetilde{g}\) is a Gaussian random variable with expectation zero and covariance 
\[
C = G G^\top = V\Sigma^2 V^\top.
\]

Then, we generate weights by projecting these random vectors to the sphere
\[
a = \frac{\widetilde{g}}{\norm{\widetilde{g}}_2} \in \S^{d-1}
\]
and a corresponding offset can be chosen uniformly from \(b \in -a \cdot D\), which are all \(b\) such that the hyperplane \(a\cdot x + b = 0\) intersects the data support.
This density is given by the parametric form
\[
\de M^{AS}_{G}(a,b)
= 
\frac{\chi_{b\in[-R,R]}}{2R} \de b \;
m_{C}(a) \de a ,
\]
where the density of \(a\) on the sphere \(\S^{d-1}\) is a projected normal distribution at location zero, also known as the ``angular central Gaussian'' distribution~\cite{ACG:1987}:
%\begin{align*}
%m_\Sigma(a)
%&= \int_{0}^\infty (2\pi)^{-d/2} \det(\Sigma)^{-1/2}
%\exp(-\tau^2 a^\top \Sigma^{-1} a/2)
%\de \tau \\
%&= (2\pi)^{-d/2} \det(\Sigma)^{-1/2}
%(a^\top \Sigma^{-1} a)^{-1/2}
%\int_{0}^\infty \exp(-t^2 /2) \de t 
%\quad\text{where }
%t = \sqrt{a^\top \Sigma^{-1} a} \tau \\
%&= (2\pi)^{-(d-1)/2}
%\left(\frac{\det(\Sigma^{-1})}{a^\top \Sigma^{-1} a}\right)^{1/2}
%\end{align*}
\begin{align*}
m_C(a)
&= \int_{0}^\infty \tau^{d-1}\sqrt{\frac{\det(C^{-1})}{(2\pi)^{d}}}
\exp(-\tau^2 a^\top C^{-1} a/2)
\de \tau \\
%&= \cdots 
%\quad(\text{substitute } \zeta = \tau^2 a^\top C^{-1} a/2)\\
&=
\frac{1}{\abs{\S^{d-1}}}
\sqrt{\frac{\det(C^{-1})}{\left({a^\top C^{-1} a}\right)^{d}}}
\quad\text{where } \abs{\S^{d-1} } = \frac{2 \pi^{{d}/{2}}}{\Gamma({d}/{2})}.
\end{align*}
This density is only valid if \(\rank G = \rank C = d\); otherwise the density is supported only the \(r\) dimensional sphere with \(r = \rank C\) given by \(\S^r = \S^{d-1} \cap \range C\). On that sphere the density has the same form 
\begin{align}
\label{eq:density_sphere}
m_C(a)
=
\frac{1}{\abs{\S^{r-1}}}
\sqrt{\frac{\det_+(C^{+})}{\left({a^\top C^{+} a}\right)^{r}}},
\quad a \in \S^{r-1} = \S^{d-1} \cap \range C,
\end{align}
where \(C^+\) is the pseudo-inverse of \(C\) and \(\det_+\) the pseudo determinant (the product of the nonzero eigenvalues).
In this method, we sample the bias uniformly \(b \sim \mathcal{U}(-R,R)\), as in the uniform method.

\subsubsection{Local gradient based sampling.}
\label{sec:gradient_sampling}
However, the active subspace approach averages gradient directions over the whole domain, and does not distinguish between different locations \(x\) that lie close to the hyperplane \(a\cdot x + b = 0\).
Alternatively, we construct a density on the hyperplanes as follows: We start by randomly selecting a data point \(x_k\) with associated gradient \(g_k \approx \nabla f(x_k)\) according to the magnitude of the gradient \(\norm{g_k}\), motivated by formula~\eqref{eq:grad_network_1}.
Then we sample a hyperplane \((\hat{a}_k, \hat{b}_k)\) with \(\hat{a}_k = \pm g_k / \norm{g_k}_2\) and \(\hat{b}_k = - \hat{a}_k\cdot x_k\).
This is the hyperplane parallel to the gradient of \(f\) at \(x_k\) that intersects the point \(x_k\). We introduce the function \(w\colon \R^d\times \R^d\setminus\{0\} \to \R^{d+1}\) with \(w(x,g) = (g, -x\cdot g) / \norm{g}_2\), which allows to write \((\hat{a}_k, \hat{b}_k) = \pm w(x_k,g_k)\).

This leads, in the data discrete case, to the discrete distribution
\begin{equation}
\label{eq:gradient_density_discrete}
M^0_{G}
= \frac{1}{m_K} \sum_{k=1}^K \frac{\norm{g_k}_2}{2} \left(\delta_{w(x_k, g_k)} + \delta_{-w(x_k, g_k)}\right),
\end{equation}
where \(m_K = \sum_k \norm{g_k}_2\) is the normalization factor to obtain a probability distribution (we also exclude points with \(g_k = 0\) from the sum). We could also add the data density \(\rho(x_k)\), as in~\eqref{eq:approximate_density_MC}, but since it is usually not known, we leave it out immediately. This density is very easy to sample from, since is a discrete distribution with finite support,
and there exist very efficient methods to sample from categorical distributions (using a uniform \(\mathcal{U}([0,1])\) random variable, and the cumulative probabilities).
However, this is also a big drawback since we only obtain a finite set of feature functions with this approach.
It is thus not clear that this distribution will guarantee a universal approximation property.

Note that in the continuous data case, the corresponding continuous probability distribution on the parameter space can be given as the average of the push-forwards of the measure \(\norm{\nabla f(x)}_2 \de \nu (x)\) with respect to the mappings \(x \mapsto w(x,\nabla f(x))\) and \(x \mapsto -w(x,\nabla f(x))\):
\[
\int \chi(a,b) \de M_{\nabla f}(a,b)
= \frac{1}{m_{\nabla f}} \int \left(\chi(w(x,\nabla f(x)) + \chi(-w(x,\nabla f(x))\right) \norm{\nabla f(x)}_2 \de \nu (x),
\]
using that \(\nabla f = 0\) only on a \(\norm{\nabla f}_2 \de \nu\)-zero set.

\subsubsection{Nonlocal gradient based sampling.}
\label{sec:gradient_nonlocal}

To obtain a distribution with continuous density, we allow for uncertainty in the weights and offsets.
For this purpose, we introduce a second hyperparameter \(\delta_W\), that will determine a approximate radius in the physical space around each data point \(x_k\) from where gradients of the function \(f\) will be aggregated to determine a distribution on the space of parameters (hyperplanes).

In this more general formulation, for each data point \(x_k\) we first define a weight~\(w_{k',k}\) that determines how much the data from another point \(x_{k'}\) will influence the sampling of hyperplanes associated to \(x_k\). We require \(w_{k',k} = w_{k,k'} > 0\). For instance, we can specify the form of the weights \(w_{k,k'}\) with a simple radial kernel with approximate width \(\delta_W\) as
\[
w_{k,k'} = \exp(-\norm{x_k - x_{k'}}_2/(2\delta_W)).
\]
Then a ``nonlocal gradient'' random variable is defined:
\[
\widetilde{g}_k
= \sum_{k'=1}^K w_{k,k'} \xi_{k'} g_{k'},
\quad\text{where } \xi_{k'} \sim \mathcal{N}(0,1) \text{ i.i.d. }
\]
Together with the weight matrix \(W = [w_{k,k'}]_{k,k'=1,\ldots K}\), and \(\widetilde{g}_k = G \diag(W_k) \xi\), this is a centered Gaussian random variable with the covariance
\[
C_k
= \sum_{k'} w_{k,k'}^2 g_{k'}^{\phantom{\top}} g_{k'}^\top
= G \diag(W_k^2) G^\top,
\]
where \(W_k\) is the \(k\)-th row of the weight matrix.
It is clear that this choice generalizes both the first and the second method. The active subspace approach is obtained by using \(w_{k,k'} = 1\) for all \(k,k'\), which means that all \(C_k\) are identical (as is obtained by \(\delta_W \gg R\)).
The second method above takes \(w_{k,k'} = \delta_{k,k'}\) (as results from \(\delta_W \to 0\)), and thus \(\widetilde{g}_k\) is proportional to \(g_k\).

Now that we have determined a way to sample a direction, it remains to turn it into a distribution on the space of hyperplanes. First, we obviously normalize \(a_k = \widetilde{g}_k / \norm{\widetilde{g}_k}\), which induces the density \(m_{C_k}(a)\) given in~\eqref{eq:density_sphere}.
after having obtained \(a\), we sample an offset \(b\) from a density that is centered around the offset that makes the corresponding hyperplane intersect \(x_k\):
\[
m_{x_k}(a,b) = \bar{\eta}_{\delta_W}(a\cdot x_k + b),
\]
where \(\bar{\eta}_{\delta_W}\) is a centered distribution with standard deviation \(\sim \delta_W\), e.g., a Gaussian \(\mathcal{N}(0, \delta_W^2)\).
Then, we build the density as a mixture of theses single densities over all data points \(k\):
\begin{equation}
\label{eq:gradient_nonlocal_density_discrete}
\de M^W_{G}(a,b)
= \frac{1}{m^W_K}\sum_{k=1}^K \sqrt{\tr(C_k)} \; m_{x_k}(a,b) \de b \; m_{C_k}(a) \de a\rvert_{\range C_k},
\end{equation}
where \(m^W_K = \sum_{k=1}^K \sqrt{\tr(C_k)}\).

The choice of the exponential kernel here is essentially arbitrary, any kernel of approximate width \(\delta_W\) should yield similar results.
We note that if these weights are strictly positive (as is the case for the radial formula above), all ranges \(\range C_k\) are equal to the range of the gradient field \(\range G\). Thus, the support of the density~\eqref{eq:gradient_nonlocal_density_discrete} contains the full parameter set \(\Omega_R\) intersected with the range of \(G\), similar to the active subspace distribution from section~\ref{sec:AS}. Thus, the universal approximation property of the approximation scheme for uniform sampling method is preserved for all functions \(f\) that are constant on the complement of \(\range G\).
Thus, unless the range of the gradients \(g_k\) in the samples \(x_k\) is not equal to the range of the gradient over all \(x \in X\) (which is very unlikely if \(K \gg d\)), the function \(f\) is constant on \(\range G\) and can be approximated arbitrarily well by using an increasing number of parameters sampled from~\eqref{eq:gradient_nonlocal_density_discrete}.

\subsubsection{Sampling from the approximate densities.}

The constructed densities are mixture models based on mixtures over the data points and associated simple projected Gaussian and and one-dimensional distributions, for which efficient sampling methods exist. We first randomly select a number of data points according to a categorical distribution (using a histogram of uniformly \(\mathcal{U}([0,1])\) values with break points given by the cumulative probabilities), and then directly sample the associated weight \((a_n,b_n)\) based on its underlying (angular central) Gaussian distribution.

\subsection{ReLU and softplus networks}
For ReLU networks, an alternative version of Proposition~\ref{prop:exact_integral_density} can be given as follows. Here, the important quantity is given by an inverse adjoint Radon transform of the Hessian instead of the gradient, so we build simplified densities from this quantity.
\begin{proposition}
\label{prop:exact_integral_density2}
Let \(f \in C^{d+s}_c(X)\) for a domain \(X\subset \R^d\).  Then, the exact density 
of the smoothed function \(f_\delta = \eta_\delta * f = \NN[c_\delta]\) is given by
\[
c_{\delta}(a,b)
= a^T \widetilde{H}_{s,\delta}(a,b) a
= \int_{\R^d} a^T \nabla^2 f(x) a \, \widehat{\psi}_{s-2,\delta}(a\cdot x + b) \de x.
\]
\end{proposition}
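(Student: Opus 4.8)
The plan is to follow the proof of Proposition~\ref{prop:exact_integral_density} essentially verbatim, but now trading \emph{two} offset derivatives for a Hessian contraction instead of one offset derivative for a gradient contraction. Assume throughout that \(s \geq 2\), so that \(\widehat{\psi}_{s-2,\delta} = c_d (-\partial_b)^{s-2}\Lambda_b^{d-1}\widehat{\eta}_\delta\) is a well-defined smooth function with the decay inherited from \(\widehat{\eta}\) in~\eqref{eq:bump_function}; for \(s=2\) it is exactly \(c_d\Lambda_b^{d-1}\widehat{\eta}_\delta\), the influence factor attached to ReLU/softplus. Starting from Corollary~\ref{cor:exact_rep}, the exact density of \(f_\delta = \eta_\delta * f\) is
\[
c_\delta(a,b) = \int_{\R^d} \widehat{\psi}_{s,\delta}(a\cdot x + b)\, f(x)\,\de x,
\qquad \widehat{\psi}_{s,\delta} = c_d(-\partial_b)^{s} \Lambda_b^{d-1}\widehat{\eta}_\delta,
\]
and since \(\widehat{\psi}_{s,\delta} = (-\partial_b)^2 \widehat{\psi}_{s-2,\delta} = \partial_b^2 \widehat{\psi}_{s-2,\delta}\), we may rewrite \(c_\delta(a,b) = \int_{\R^d} \partial_b^2\!\left[\widehat{\psi}_{s-2,\delta}(a\cdot x + b)\right] f(x)\,\de x\).

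The key elementary identity is the chain rule for ridge functions: for \(a \in \S^{d-1}\) and any smooth \(h\colon \R \to \R\) one has \(\partial_b[h(a\cdot x + b)] = h'(a\cdot x+b) = \norm{a}^2 h'(a\cdot x+b) = a\cdot\nabla_x[h(a\cdot x+b)]\), because \(\norm{a}=1\). Applying this twice (first with \(h = \widehat{\psi}_{s-2,\delta}\), then with \(h = \widehat{\psi}_{s-2,\delta}'\)) gives
\[
\partial_b^2\!\left[\widehat{\psi}_{s-2,\delta}(a\cdot x+b)\right]
= (a\cdot\nabla_x)^2\!\left[\widehat{\psi}_{s-2,\delta}(a\cdot x+b)\right]
= a^T \nabla_x^2\!\left[\widehat{\psi}_{s-2,\delta}(a\cdot x+b)\right] a .
\]
Substituting this into the integral and integrating by parts twice in \(x\) — legitimate since \(f \in C^{d+s}_c(X)\) has compact support, so all boundary terms vanish, while \(\widehat{\psi}_{s-2,\delta}\) is smooth — moves both derivatives onto \(f\):
\[
c_\delta(a,b)
= \int_{\R^d} a^T \nabla_x^2\!\left[\widehat{\psi}_{s-2,\delta}(a\cdot x+b)\right] a \; f(x)\,\de x
= \int_{\R^d} \widehat{\psi}_{s-2,\delta}(a\cdot x+b)\; a^T \nabla^2 f(x)\, a \,\de x .
\]
Recognizing the right-hand side as \(a^T \widetilde{H}_{s,\delta}(a,b)\, a\) with \(\widetilde{H}_{s,\delta}(a,b) = \int_{\R^d} \nabla^2 f(x)\,\widehat{\psi}_{s-2,\delta}(a\cdot x+b)\,\de x\) completes the proof.

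There is no genuine obstacle here: the argument is simply the single integration by parts from Proposition~\ref{prop:exact_integral_density} iterated once more, with the ridge-function chain rule used twice to turn \(\partial_b^2\) into the contraction \(a^T\nabla_x^2 a\). The only points deserving a word are (i) the standing hypothesis \(s\geq 2\), needed for \(\widehat{\psi}_{s-2,\delta}\) to make sense, and (ii) the routine justification of Fubini and of the two \(x\)-integrations by parts, which follows from the assumed \(C^{d+s}\) regularity and compact support of \(f\) together with the smoothness and decay of \(\widehat{\psi}_{s-2,\delta}\). As in the discussion following Proposition~\ref{prop:exact_integral_density}, the compact-support assumption on \(f\) can be relaxed to \(\nabla^2 f \approx 0\) outside the domain of interest for the purposes of the sampling heuristic.
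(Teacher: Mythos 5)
Your proposal is correct and follows essentially the same route the paper intends: Proposition~\ref{prop:exact_integral_density2} is simply the statement of Proposition~\ref{prop:exact_integral_density} with the integration-by-parts step iterated once more, trading \((-\partial_b)^2\) for the contraction \(a^T\nabla_x^2 a\) via \(\norm{a}=1\), which is exactly what you do. The remarks on \(s\geq 2\) and the vanishing boundary terms from compact support are the right (and only) points needing mention.
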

Again, the result is valid for all \(s\), but for \(s = 2\) due to \(\widehat{\psi}_{s-2,\delta} = \Lambda_b^{d-1} \widehat{\eta}_\delta\) has an interpretation as a smoothed inverse adjoint Radon transform of the Hessian
\[
\widetilde{H}_{2,\delta}
= \widehat{\eta}_\delta *_b \Rad^{-*}[\nabla^2 f]
= \Rad^{-*}[\eta_\delta * \nabla^2 f].
\]
To derive direct expressions for efficient sampling, we compute the Hessian
\[
\nabla^2 f
\approx \nabla^2\NN_{\delta}(x;\bld{A},\bld{b},\bld{c})
= \sum_{n=1}^N c_n \widehat{\eta}_\delta(a_n\cdot x + b_n) a_n a_n^T,
\]
using that the polynomial \(p_0\) is a linear polynomial and fulfills \(-\nabla^2 p_0 = 0\). 
Thus each data point \(x_k\) with Hessian \(\nabla^2 f(x_k) = H_k\) suggests sampling neuron with \(\hat{a}\) from the span of the Hessian, \(\hat{b} \approx - \hat{a}\cdot x_k\) and likelihood related to the eigenvalues of the Hessian.

Analogously to the gradient based sampling, we define a ``nonlocal Hessian vector'' random variable as
\[
\widetilde{h}_k
= \sum_{k'=1}^K w_{k,k'}  H_{k'}\xi_{k'},
\quad\text{where } \xi_{k'} \sim \mathcal{N}(0,I_d) \text{ i.i.d. }
\]
This is again a centered Gaussian random variable with the covariance
\[
C_k
= \sum_{k'} w_{k,k'}^2 H_{k'}^{\phantom{\top}} H_{k'}^\top.
\]
Based on these definitions, the Hessian based density
\(M^W_{H}\) can be defined just as in~\eqref{eq:gradient_nonlocal_density_discrete}. In this case, even for \(w_{k,k'} = \delta_{k,k'}\) a continuous density is obtained (unless the function \(f\) is linear in certain directions), but a small nonlocality in the weights \(w\) appears reasonable to improve robustness.

\subsection{Weight sampling based on residual.}
\label{sec:nonuniform_residual}

The local methods discussed in sections~\ref{sec:approximate_sampling} are only rough approximations suggested by the exact formula from section~\ref{sec:exact_sampling}. Thus, there can be situations where they perform in a sub-optimal way, by oversampling regions of the parameter space with large derivatives of the function. Although this is largely mitigated in our experiments by the more robust versions from section~\ref{sec:gradient_nonlocal} (see section~\ref{sec:numerics}), we briefly mention an incremental residual based approach that also performs well.

Consider the case \(s=1\) and suppose we have sampled a small number of weights \(\omega^{(0)}_n\) \(n = 1,\ldots,N_0\) from a distribution \(\de M_{\bld{x},\bld{g}^{(0)}}\) for the gradient vector \(\bld{g}^{(0)}\) associated to \(f\) and trained outer weights \(c^{(0)}_n\) by ridge regression~\eqref{eq:ridge}. The gradient of the remaining residual is given by
\[
g^{(1)} = \nabla_x r^{(0)}
= \nabla_x ( f - \NN(x; \bld{\omega}^{(0)}, \bld{c}^{(0)}) )
= g^{(0)} - \nabla_x \NN(x; \bld{\omega}^{(0)}, \bld{c}^{(0)}).
\]
Since the second expression is only defined for \(\delta > 0\), we generally assume this for this subsection.

Since the shallow network representation~\eqref{eq:shallow_intro} is additive, we can thus consider the new task of approximation of the residual with a network. To sample more inner weights \(\omega_n\), \(n = N_0+1,\ldots, N_1\), we thus employ the density \(\de M_{\bld{x},\bld{g}^{(0)}}\) based on the gradients of the residual. Then, we compute new outer weights \(\bld{c}^{(1)}\) with~\eqref{eq:ridge} for all sampled weights.
By spacing the numbers of neurons in a logarithmic way \(N_i = \lceil \kappa^i N_0 \rceil\) with \(\kappa > 1\), we can ensure that the computational cost for ridge regression and regularization parameter choice for \(\alpha\) is dominated by the cost of the last problem with the largest number of parameters. However, we caution that this method is sensitive to the correct choice of the hyper-parameters \(\alpha\) and \(\delta\), since we rely on the regularization and smoothing to enable generalization and stable derivatives through the associated spaces from sections~\ref{sec:function_spaces}. If either of them is selected too small, taking a gradient of the network is not stable, the gradients of the residual can be a large even in areas where the residual is already small.

\section{Numerical examples}
\label{sec:numerics}

To compare the proposed sampling strategies, we perform several tests. The code repository to reproduce these results can be found at~\cite{NonuniformRandomFeatureCode}. In particular, we demonstrate that the proposed adaptive methods consistently outperform uniform sampling and simpler global strategies such as active subspaces. The difference in performance is determined by the structure of the underlying function. In all cases, the optimal regularization parameter is selected with the following cross-validation procedure: Additional to the input data we generate another validation data set from the same distribution (of equal size) together with corresponding output data. The weights are trained only on the training data, and the error is evaluated on the training plus validation data. Then, the regularization parameter is chosen by a grid search as the largest parameter that comes within \(5\%\) of the smallest validation error observed.
In general, we consider \(\delta > 0\) on the order of resolution of the data, and for the nonlocal strategies from section~\ref{sec:gradient_nonlocal} we employ \(\delta_W = 2\delta\).

As another comparison, we also consider sparse feature learning, using the nonconvex regularization strategy from section~\ref{sec:infinite_feature_regression}. 
Here, the number of weights is chosen adaptively based on the regularization parameter, and thus we simply solve the problem for a variety of regularization parameters.
Compared to the well-established convex approach summarized in section~\ref{sec:sparse_convex}, this leads to sparse representations with lower number of neurons \(N\), while maintaining similar accuracy. We do not expect the nonuniform sampling methods to outperform the sparse optimization methods in terms of the number of neurons required for accurate representation. Instead, we consider it to be the best currently possible training method, which is based on gradient based training and neuron insertion and deletion (as occurs in boosting, vertex exchange, or generalized conditional gradient methods). Clearly, this approach is orders of magnitude more expensive during training than a random feature model (for the same number of neurons), and thus random feature models have a speed advantage during training. However, during evaluation of the network, only the number of neurons determines the computational effort, and thus finding the smallest network with the best possible performance during training is beneficial. To carry over this advantage to a random feature model it is important that the nonuniform random weights come as close as possible to the fully trained weights.

\subsection{Instructive test-cases}

We start with some structurally simple examples to highlight important behaviors of different methods.

\subsubsection{Different activation functions in one dimension}
\label{sec:example_1d}

To illustrate the basic concepts, we consider first the simple one dimensional example from the introduction: \(f(x) = \exp(- (10x)^2/2)\), using \(1000\) uniformly spaces points \(x_k\) in the unit interval, and additive Gaussian noise and \(\delta = 1/80\).
Since the setup fulfills all requirements of all nonuniform densities described in section~\ref{sec:gradient_nonuniform_random_features}, we compare in particular the nonuniform density from sections~\ref{sec:exact_sampling} and~\ref{sec:gradient_sampling}. The former is based on a standard approximation (mollification) of an exact representation, and can be considered close to ideal. The second is the most simple to evaluate (out of all presented options), but may not be optimal for all situations. However, for this trivial example, both are not significantly different in practice; see Figure~\ref{fig:convergence_gauss}.
\begin{figure}
    \includegraphics[width=0.45\textwidth]{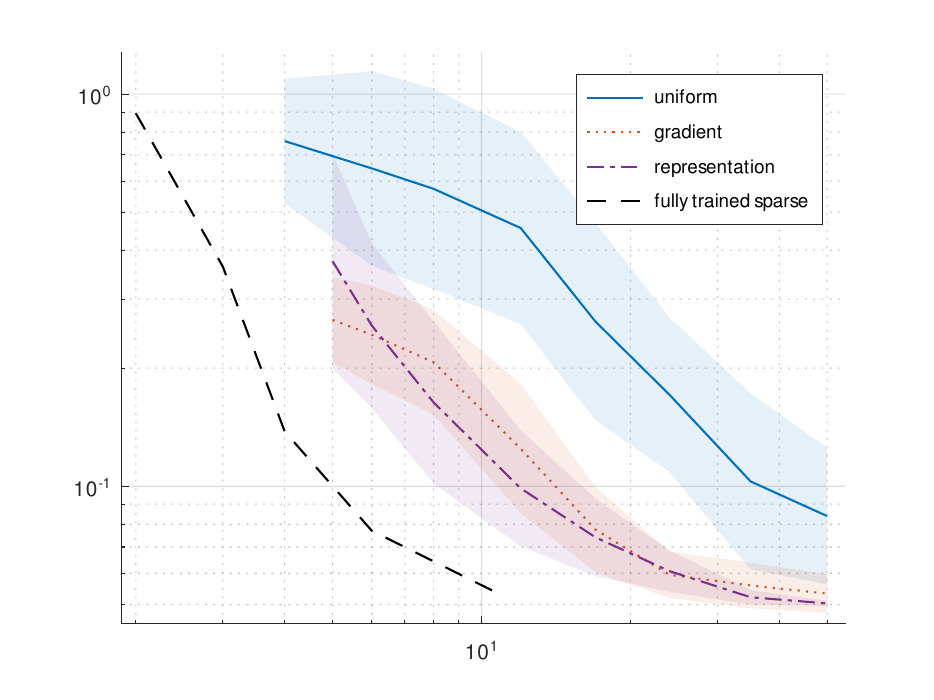}
    \includegraphics[width=0.45\textwidth]{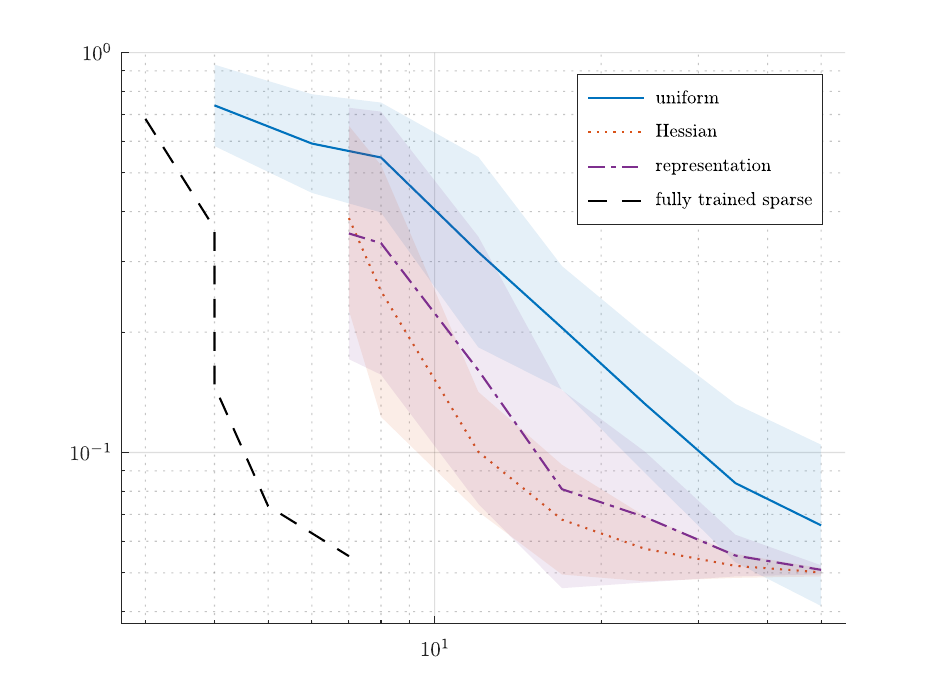}
    \caption{Convergence plot for the Gaussian bump, left: \(s=1\), right: \(s=2\).}
    \label{fig:convergence_gauss}
\end{figure}
However, we observe a significant difference between uniform sampling, nonuniform sampling and optimally trained networks. The quantitative error over the number of parameters shows that the fully trained networks learns the function (up to the noise level) with \(N=11\), and nonuniform sampling around \(N=30\), as illustrated in the introduction in Figure~\ref{fig:intro_spatial_adapt}. However, the uniform sampling has not reached this accuracy at \(N=50\).

\subsubsection{Instructive examples in two dimensions.}
\label{sec:example_2d}
Next, we consider several examples in two dimensions. Here, we use \(1000\) uniformly spaces points \(x_k\) in a centered cube with side length \(\sqrt{2}\), which tightly fits inside the unit circle, together with \(\delta = 1/40\) and \(s=1\).
We compare the different nonuniform sampling strategies based on the gradient from section~\ref{sec:approximate_sampling}, including the active subspace based approach and the residual based version together with the reconstruction formula based density from section~\ref{sec:exact_sampling}.

\subsubsection{Planar wave in two dimensions.}
First, we consider the second example from the introduction
\(f(x) = \sin(5 (x_1 + \sqrt{2}x_2))\).
We observe that the gradient based method performs significantly better than uniform sampling, and comes close to the optimal sparse approach, as already illustrated qualitatively in Figure~\ref{fig:intro_direct_adapt}. However, for this example, also the active subspace based anisotropic uniform sampling method from section~\ref{sec:AS} performs very well, since it also correctly identifies the global subspace that the function varies on. However, the approximation of the exact representation based formula from section~\ref{sec:exact_sampling} does not perform well, since the function does not fulfill the requirements of the derivation (notably it does not have compact support).
\begin{figure}
    \includegraphics[width=0.45\textwidth]{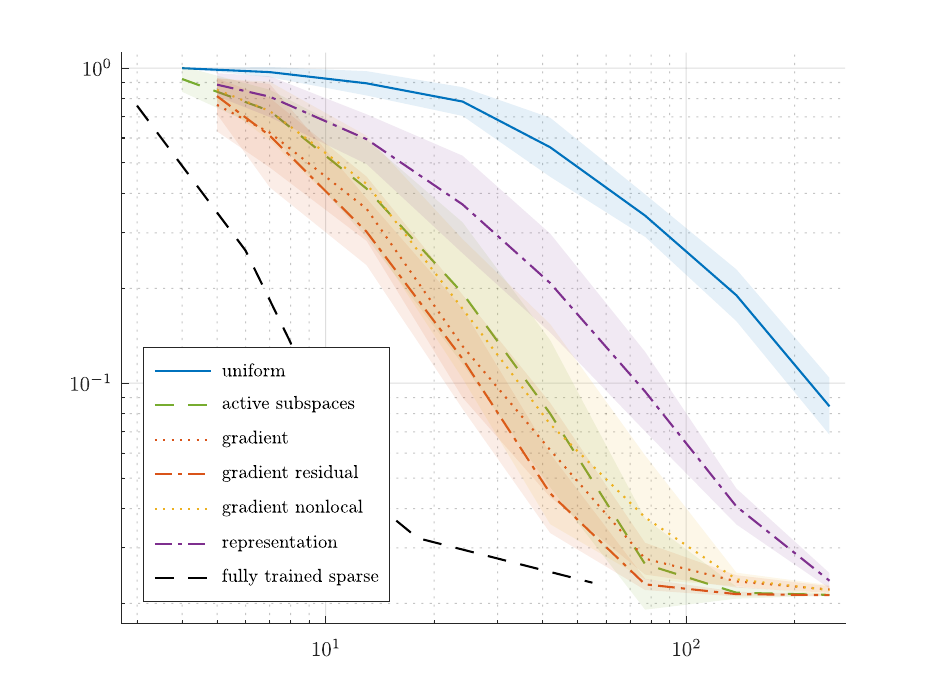}
    \includegraphics[width=0.45\textwidth]{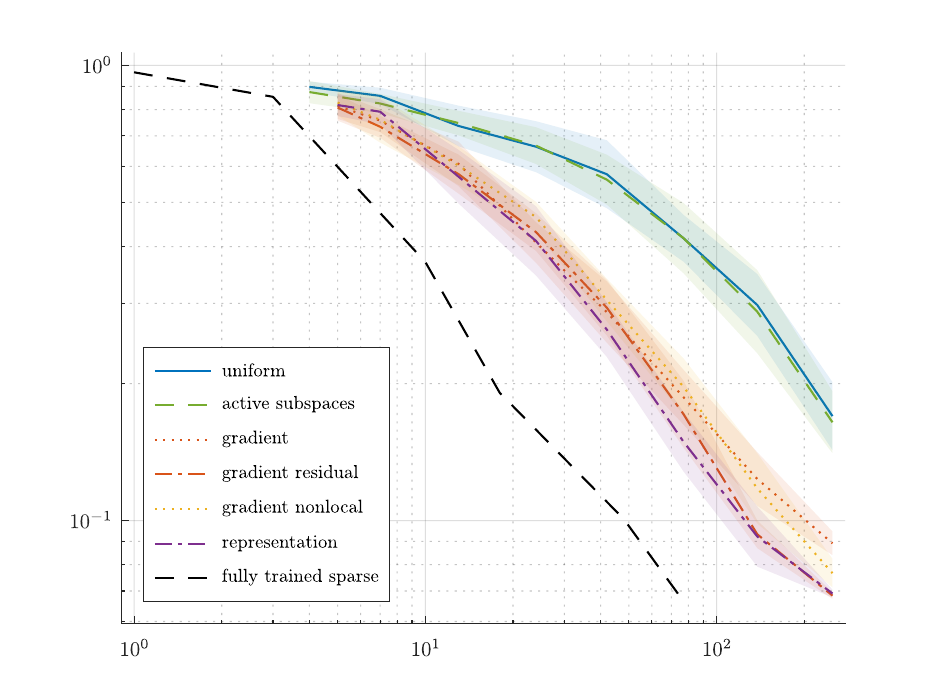}
    \caption{Convergence plots for anisotropic examples from the introduction; left: globally anisotropic planar wave, right: locally anisotropic Gaussian ``checkmark'' function.}
    \label{fig:convergence_2d}
\end{figure}

To further illustrate this, we plot the densities \(\de M\) associated to: the discretization of the reconstruction formula~\eqref{eq:approximate_density_MC}, the gradient based density~\eqref{eq:gradient_density_discrete}, and the sparse approximation to the exact density learned with sparse feature learning~\ref{eq:psi_regression_continuous}.
We visualize these densities in the following way: first, we obtain a large number of samples for the random models, and interpret them as equally weighted samples. Then, we obtain a distribution suitable for visualization purposes by fitting a kernel density estimate (KDE) with a kernel on \(\S^{1} \times [-1,1]\) with empirically chosen width, where the sphere is parameterized in terms of its angle. For sparse feature learning, we fit a KDE to the measure \(\sum_n \abs{c_n} \delta_{(a_n,b_n)}\), where \((a_n,b_n,c_n)\) are the trained weights. 
The results are given in the top row of Figure~\ref{fig:density_2d}. Since the gradient based samples and the learned weights perfectly align with the active direction \((1, \sqrt{2})\), the corresponding KDE estimates are supported on a small neighborhood of the line representing this direction (and its negative) up to the width of the KDE estimate.
However, the formula based on exact representation is affected by a substantial amount of smearing in the angular direction of \(a\), owing to the fact that the underlying formula is only valid for compactly supported functions, which is not the case here, where a global function was truncated to a finite square domain.

\begin{figure}
    \includegraphics[width=0.32\textwidth]{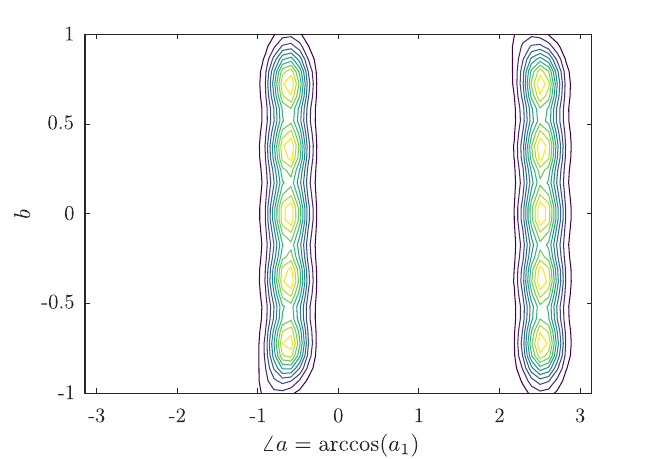}
    \includegraphics[width=0.32\textwidth]{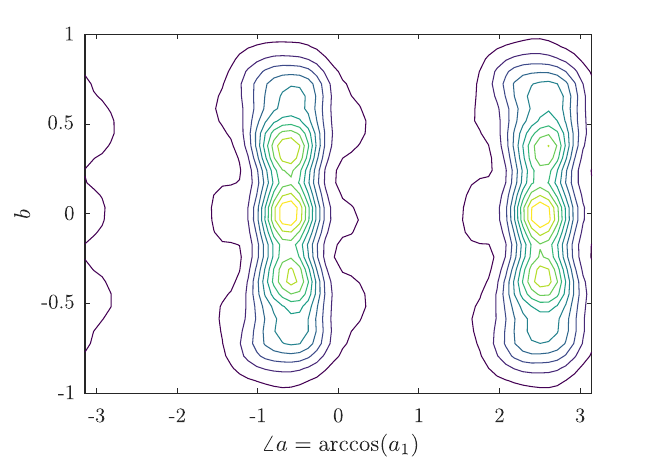}
    \includegraphics[width=0.32\textwidth]{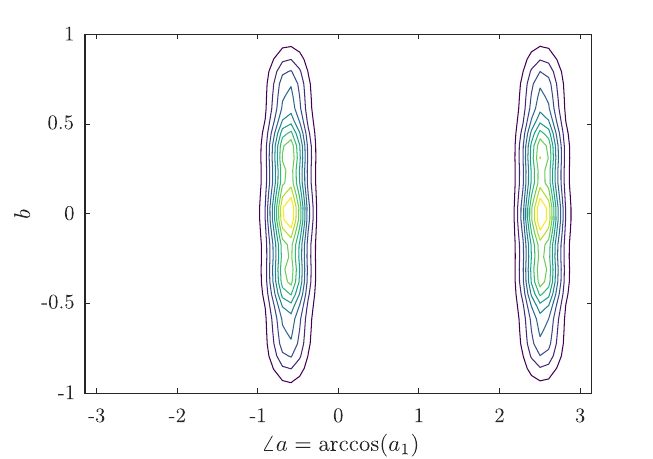}
    \includegraphics[width=0.32\textwidth]{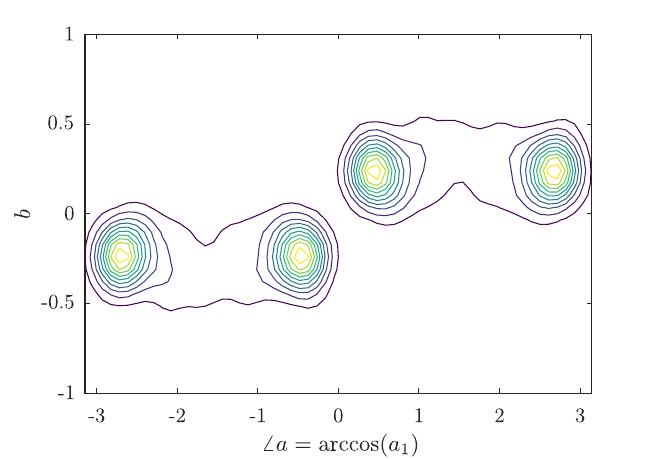}
    \includegraphics[width=0.32\textwidth]{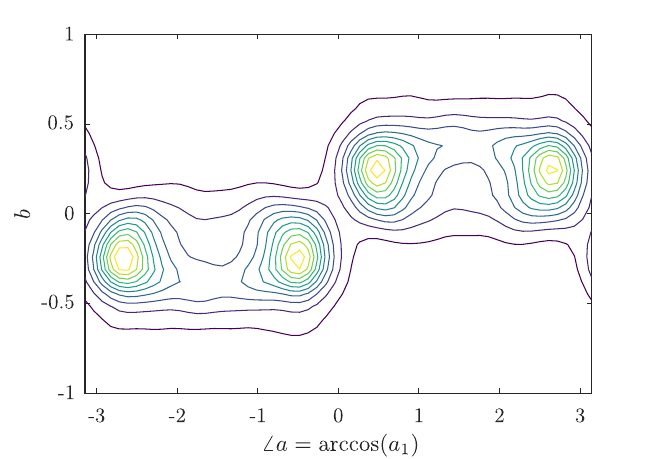}
    \includegraphics[width=0.32\textwidth]{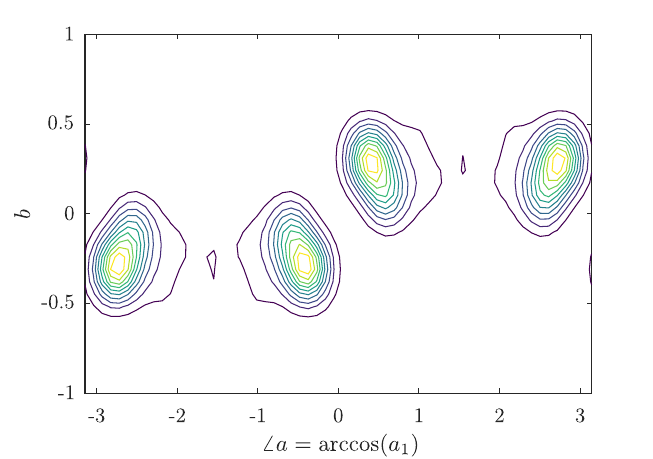}
    \caption{Density of different distributions on parameter space (KDE estimate); left to right: density from sparse feature learning, integral density, and gradient based sampling; top: globally anisotropic planar wave, bottom: locally anisotropic ``checkmark'' function.}
    \label{fig:density_2d}
\end{figure}

\subsubsection{Nonlinear manifold Gaussian function}
\label{sec:example_checkmark}
Now, we turn to the nonlinear locally anisotropic example from the introduction:
It is given in \(d\) dimensions as
\[
    f(x) = \exp( - \norm{\Sigma^{-1/2}T(x)}_2^2 / 2),
\]
where \(\Sigma^{-1/2} = \diag(\sigma_i) = \diag(8,4,2,1,1/2,\dots)\) and
\[
    T(x) 
    = \left[x_1 - H(x_2,x_3,\ldots,x_d), x_2, x_3 \ldots, x_d\right]^T,
\quad
    H(x_2,x_3,\ldots,x_d)
    = - \frac{1}{3} + \frac{2}{3} 
    h(3 \norm{x}_2),
\]
%where \(\Sigma^{-1/2} = \diag(8,4)\) and
%\[
%    T(x) 
%    = \left[x_1 - H(x_2),\, x_2\right]^T
%\quad
%    H(x_2)
%    = - \frac{1}{3} + \frac{2}{3} h\left(3 x_2\right),
%\]
where \(h(t) = t^2\) for \(\abs{t} < 1/2\) and \(h(t) = \abs{t} - 1/4\) for \(\abs{t} > 1/2\) is the Huber function.

In two dimensions, this function represents a density that is concentrated around the graph of the function \(H\) in a neighborhood of \(x_2 = 0\), thus resembling a check-mark; see Figure~\ref{fig:intro_checkmark}.
We first give the quantitative results for the images shown in the introduction:
the approximation quality in two dimensions over the number of neurons is given in Figure~\ref{fig:convergence_2d}. Here, the assumptions on the support are fulfilled (approximately, the function decays rapidly), and all nonuniform sampling densities based perform similarly, with the exception of the globally anisotropic sampling inspired by active subspaces, which performs similar to global uniform sampling. This is explained by the fact that this function does not have any global anisotropy that can be exploited.

To see if this behavior is reflected in higher dimensions, we also test \(d=3\) and \(d=4\) on the centered cube with side length \(2/\sqrt{d}\) and \(K=2000\) random training samples (and validation samples). Results for gradient based sampling (and nonlocal and residual variants) are given in Figure~\ref{fig:convergence_checkmark_hd}. As before, the active subspace distribution performs the same as uniform sampling, whereas nonlocal and residual based nonuniform sampling perform significantly better, closer to sparse feature learning. However, it is also visible that the simple gradient based strategy deteriorates for higher accuracy, likely due to the finite set of directions.
\begin{figure}
\includegraphics[width=0.45\textwidth]{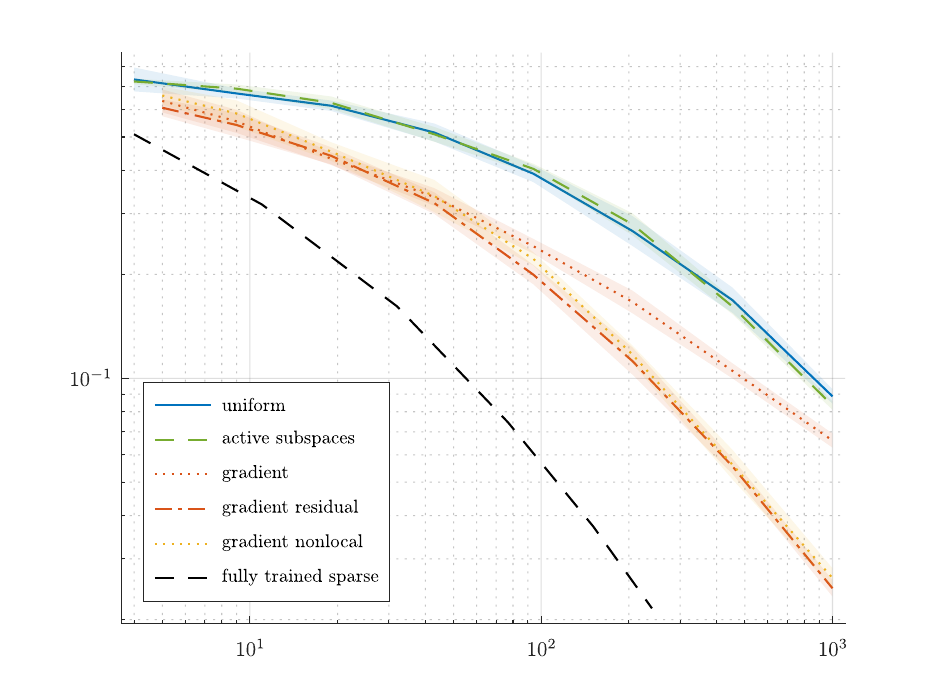}
\includegraphics[width=0.45\textwidth]{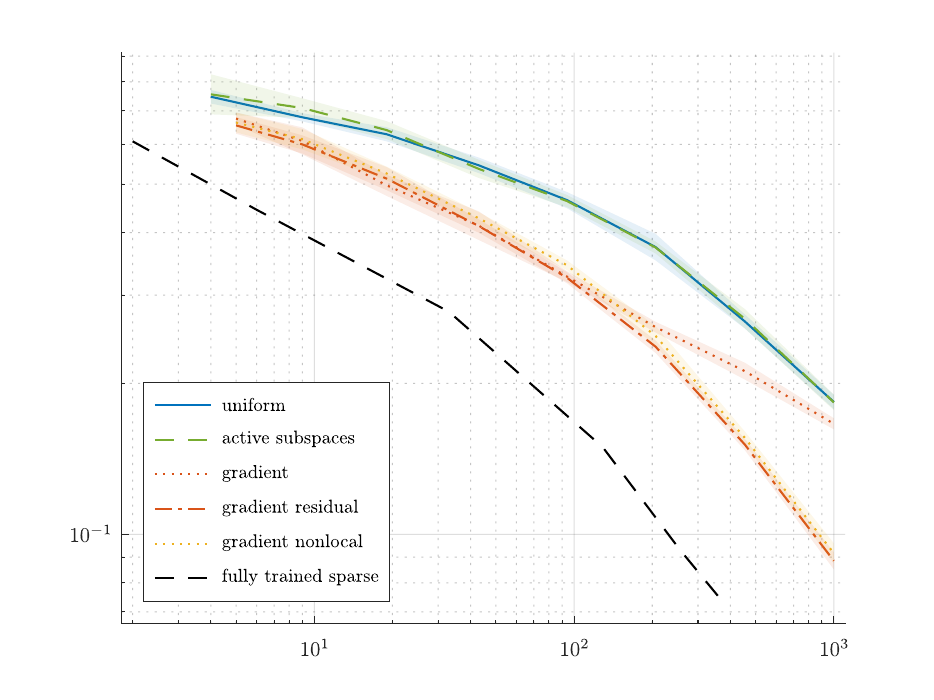}  
\caption{Convergence plots for locally anisotropic Gaussian ``checkmark'' function in three (left) and four (right) dimensions.}
    \label{fig:convergence_checkmark_hd}
\end{figure}
We will give two additional examples that illustrate the major weaknesses of the simple gradient based approach~\eqref{eq:gradient_density_discrete}.

%\subsubsection{Anisotropic Gaussian function in high dimensions}
%We illustrate the spatial adaptivity.

\subsubsection{Nonsmooth functions}
The next two examples test functions which do not fulfill the requirements of parts of our theory, and highlight the limitations of some or all developed methods. First, we test the nonsmooth function
\[
    f(x) = \max\{\,0,\, x_1,\, x_2\,\},
\]
which has only three gradients \(\nabla f(x) \in \{\,(1,0)^T,(0,1)^T,(0,0)^T\,\}\), with exception of the points 
%where \(x_1=0, x_2 > 0\), or \(x_2=0, x_1 > 0\), or \(x_1=x_2>0\),
where the function is not differentiable and the behavior of the methods is implementation specific.
Thus the simple density~\eqref{eq:gradient_density_discrete} will not be appropriate, but the behavior of other densities is less obvious. In Figure~\ref{fig:convergence_2d_counter}, we observe that indeed this density leads to a stagnation of the error with increasing number of samples. However, both the nonlocal density~\eqref{eq:gradient_nonlocal_density_discrete} and the residual strategy from section~\ref{sec:nonuniform_residual} are able to mitigate this and perform on par with uniform sampling. Only the integral formula~\eqref{eq:approximate_density_MC} and the residual based method outperform the uniform sampling by a small margin. Corresponding densities are given in Figure~\ref{fig:density_2d_counter}.

\begin{figure}
    \includegraphics[width=0.45\textwidth]{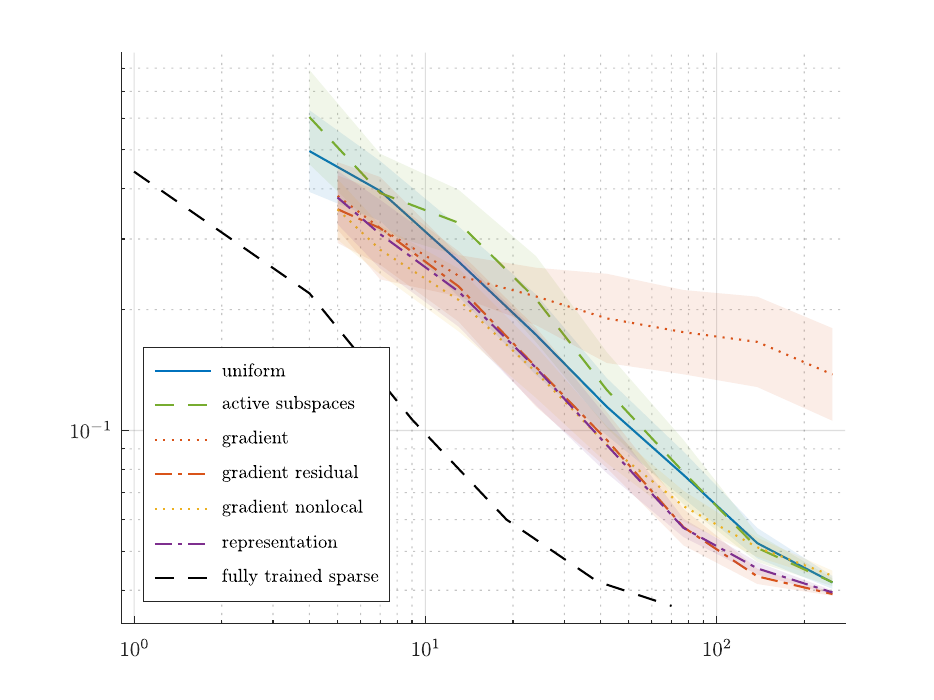}
    \includegraphics[width=0.45\textwidth]{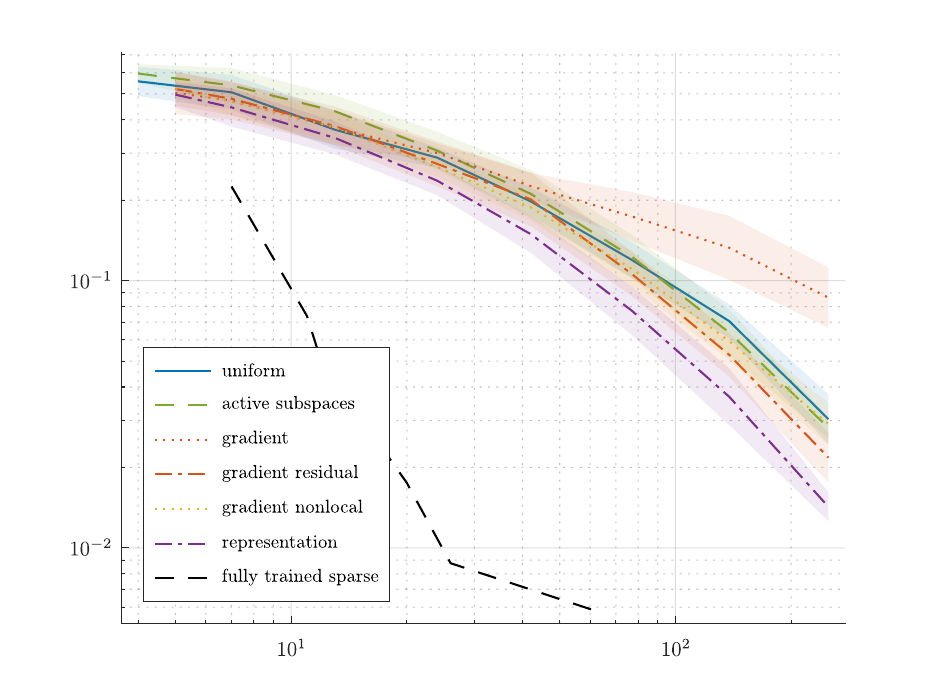}
    \caption{Convergence plots for two dimensions; left: non-smooth corner function, right: sum of two planar functions.}
    \label{fig:convergence_2d_counter}
\end{figure}

\begin{figure}
    \includegraphics[width=0.32\textwidth]{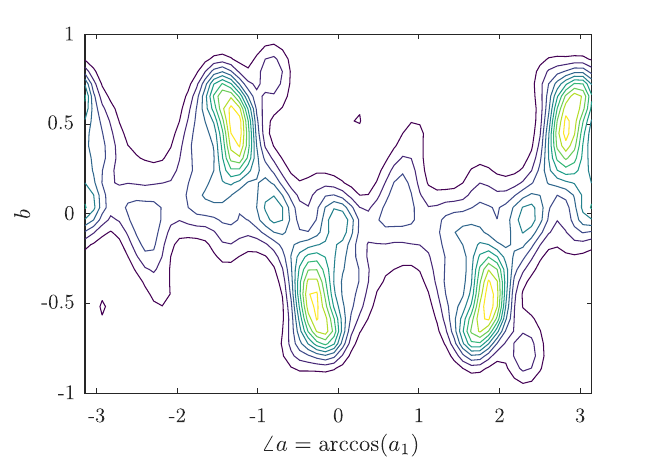}
    \includegraphics[width=0.32\textwidth]{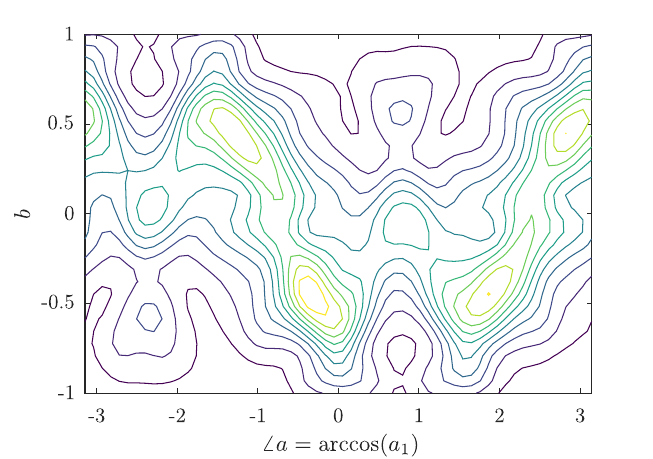}
    \includegraphics[width=0.32\textwidth]{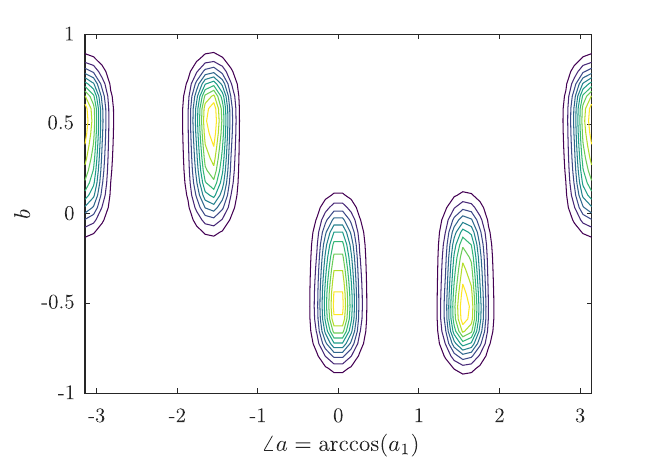}
    \includegraphics[width=0.32\textwidth]{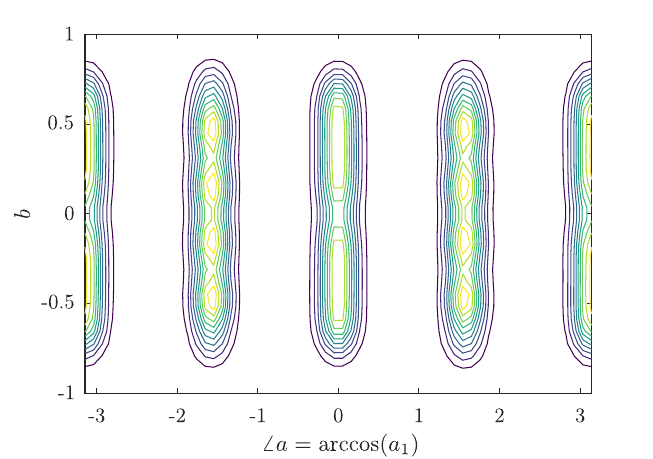}
    \includegraphics[width=0.32\textwidth]{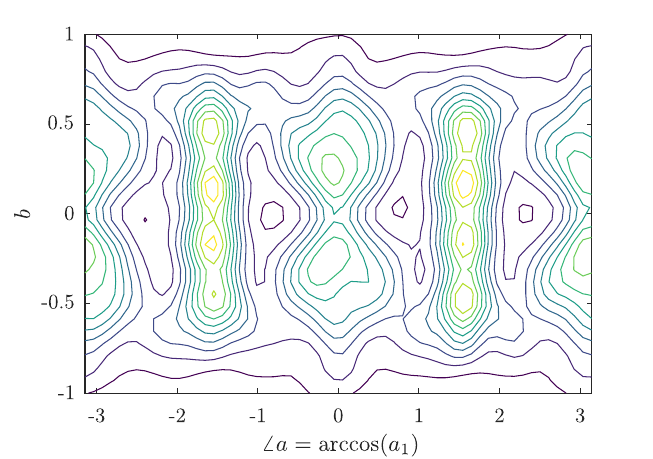}
    \includegraphics[width=0.32\textwidth]{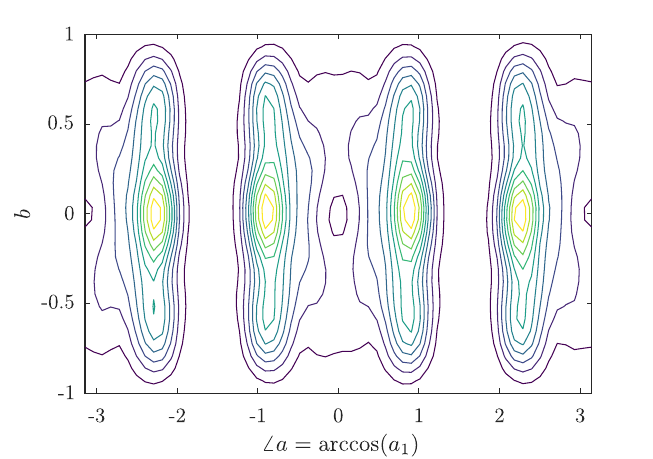}
    \caption{Density of different distributions on parameter space (KDE estimate); left to right: density from sparse feature learning, integral density, and gradient based sampling; top: non-smooth corner function, bottom: sum of two planar functions.}
    \label{fig:density_2d_counter}
\end{figure}

\subsubsection{Separable functions}
Another class of functions that have simple sparse representations are sums of planar functions (separable functions). Here we consider the example
\[
    f(x) = \cos(10 x_1) + h(7 x_2),
\]
with the previously introduced Huber function \(h\),
which has the gradient \(\nabla f(x) = -10\sin(10 x_1) (1,0)^T + 7 h'(7 x_2)(0,1)^T\).
Although the gradient is the sum of only two global directions, the largest gradient values are given for \(x_1 \approx \pm\pi/20\) and \(\abs{x_2} > 1/14\), where \(\nabla f(x) \approx (\mp 10,7)^T\). This is indeed the dominant direction produced by the simple gradient based sampling strategy; see Figure~\ref{fig:density_2d_counter} on the right. However, the two directions that allow for an efficient representation of the underlying functions \((1,0)^T\) and \((0,1)^T\) are only identified correctly by sparse feature learning. Only the approximate integral formula peaks around these weights, but is affected by substantial blurring and artifacts, since the underlying assumptions for validity of the formula are not fulfilled. This also explains the behavior of the different strategy in terms of approximation quality over the number of degrees of freedom; see Figure~\ref{fig:convergence_2d_counter}.
The sparse feature learning converges quickly, since it can exploit that \(f\) is the sum of two planar (effectively one-dimensional) functions.
The random feature models are not able to exploit this, with the possible exception of the integral formula, which has some small advantage compared to other random methods, but is substantially less accurate than fully trained weights.

\subsection{Established high-dimensional testcases}

Finally, we test the methods on some established high dimensional benchmark tests, taken from~\cite{simulationlib}.
Here we generally transform the input data by scaling and shifting so that it tightly fits inside the unit sphere, and use \(K=5000\) i.i.d.\ uniform random training and validation samples, respectively. This is only necessary for the easy comparison to the uniform random weight sampling from section~\ref{sec:uniform_random_features}, where we set \(R=1\). We use \(s=1\) and \(\delta = 1/40\).

\subsubsection{Corner peak}
We consider the ``corner peak'' benchmark~\cite{Genz:1984}:
\[
  f(x) = 
  10\left(1 + a\sum_{i=1}^d x_i\right)^{-d-1},
  \quad
  x \in [0,1]^d, a > 0.
\]
The results for \(a=2\) in three and four dimensions are given in Figure~\ref{fig:convergence_corner_peak_hd}.
Similar to the two dimensional planar wave, we observe a big difference between methods that are able to identify the ``active subspace'', with an additional advantage for methods that can place more hyperplane close to the peak in the corner.
\begin{figure}
    \includegraphics[width=0.45\textwidth]{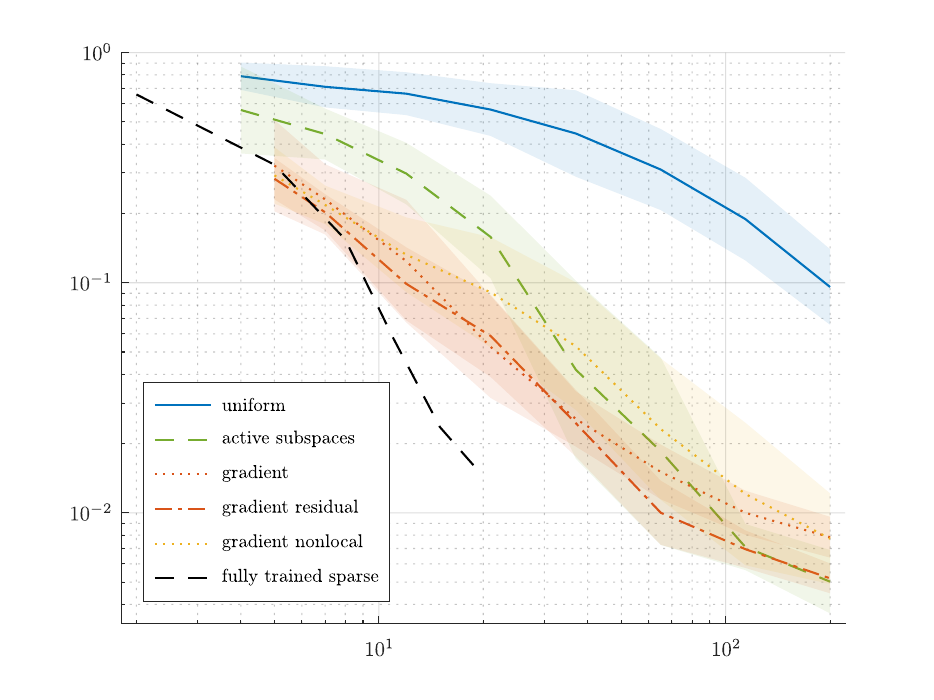}
    \includegraphics[width=0.45\textwidth]{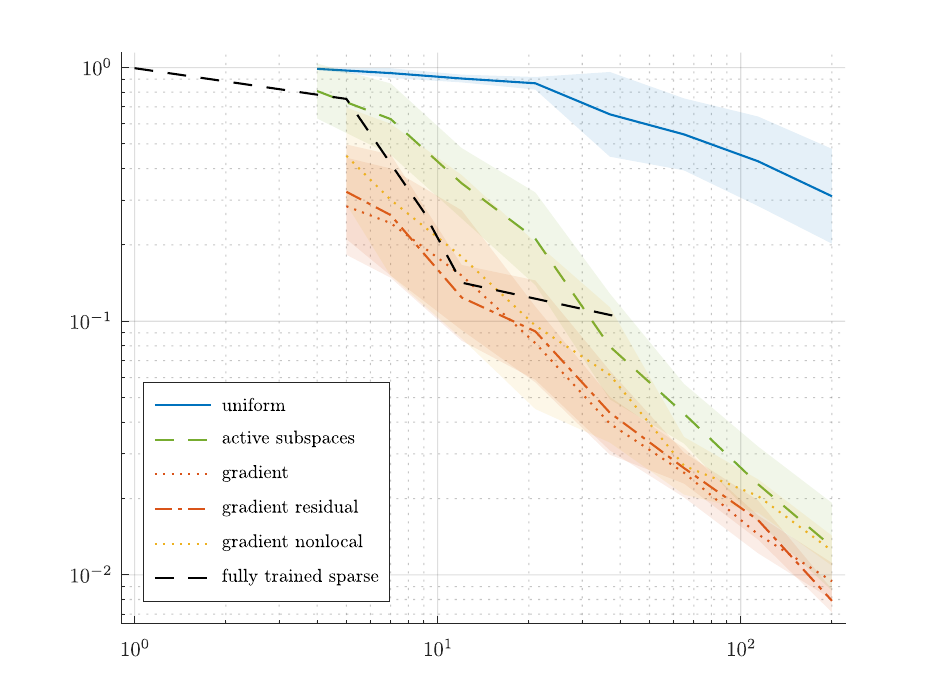}
    \caption{Convergence plots for the corner peak in three (left) and four (right) dimensions.}
    \label{fig:convergence_corner_peak_hd}
\end{figure}

\subsubsection{Robot arm and borehole functions}

Finally we consider two moderately high dimensional established benchmarks motivated by applications to see if the differences between methods observed in the manufactured cases transfer to those examples.
First, we consider the ``Robot arm'' testcase (cf.~\cite{An:2001}). This describes the length of a robot arm with \(d/2\) segments of length \(x_{2i} = L_i\) at relative angles \(x_{2i+1} = \theta_i\) for \(i=1,\ldots,d/2\).
\[
  f(x) = \sqrt{u^2 + v^2};
\quad
  u = \sum_{j=1}^{d/2} L_i \cos(\Theta_i),
  v = \sum_{j=1}^{d/2} L_i \sin(\Theta_i),
\quad
  \Theta_i = \sum_{j=1}^{i} \theta_j,
\]
\begin{figure}
    \includegraphics[width=0.45\textwidth]{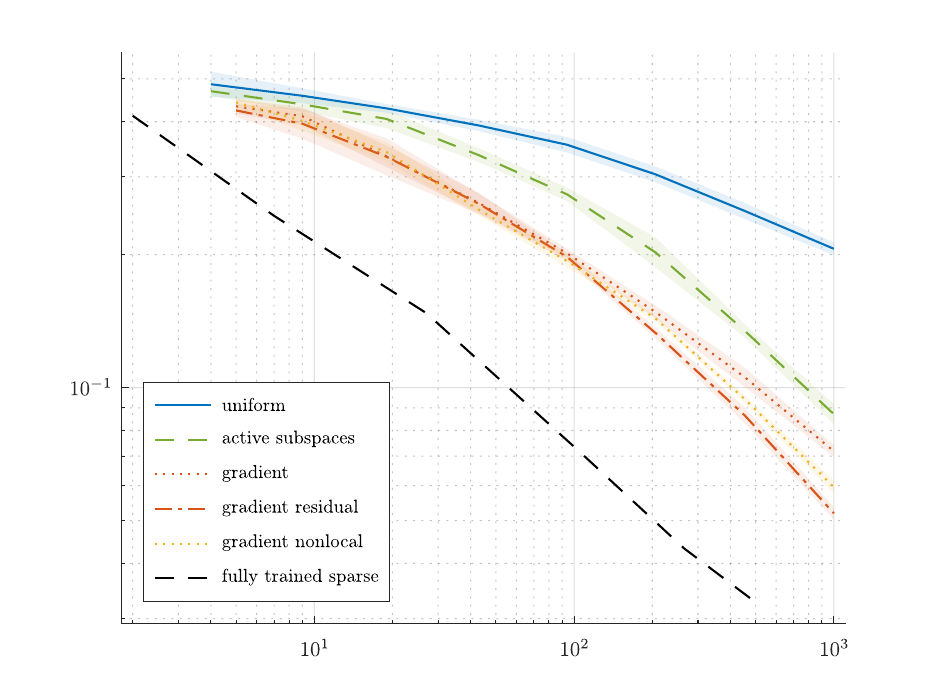}
    \includegraphics[width=0.45\textwidth]{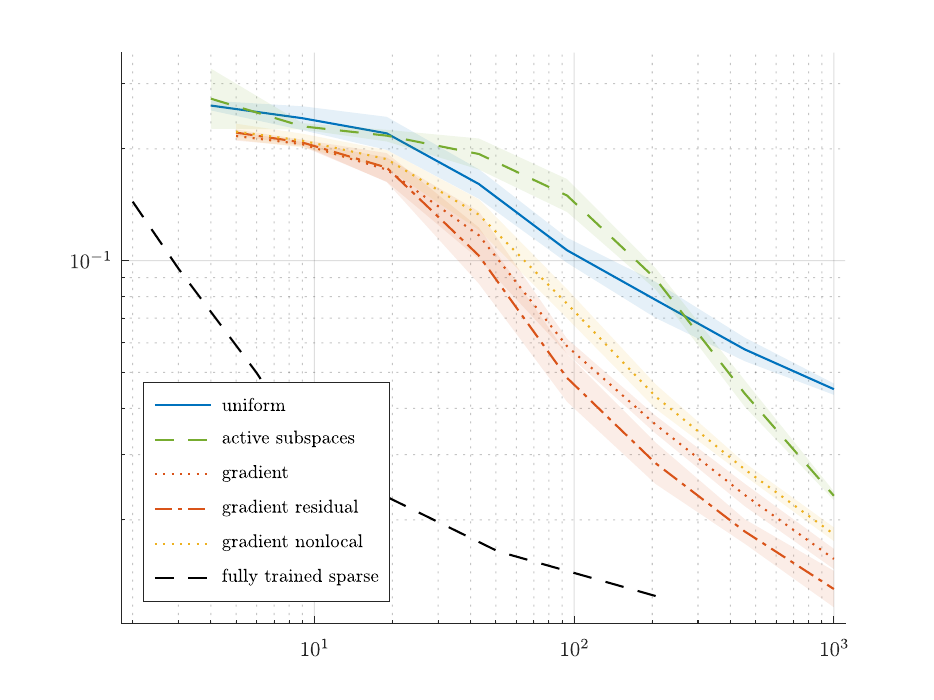}
    \caption{Convergence plots: for the Robot arm in six dimensions (left); for the eight dimensional borehole function (right).}
    \label{fig:convergence_hd}
\end{figure}
Second, we consider the ``borehole'' function proposed in~\cite{Morris:1993}, which is eight dimensional. The results are given in Figure~\ref{fig:convergence_hd}.
In general, we observe a similar behavior as suggested by the previous test cases. In the robot arm case, there is a significant improvement from the identification of the globally dominant direction (as in active subspaces), but further improvements are obtained by taking location of the hyperplanes into account. In the borehole example, there is a more substantial difference between globally and locally anisotropic weight sampling. For low number of weight samples, the active subspaces based approach is not able to outperform uniform sampling, with improvements visible only for more than \(\sim 300\) neurons. However, the localized hyperplane sampling apparently can predict better the location of hyperplanes better with respect to where the variability is located in the input space, and perform consistently better than anisotropic or isotropic uniform weight sampling. 

\section*{Acknowledgement}
This material is based upon work supported by the U.S. Department of Energy, Office of Science, Office of Advanced Scientific Computing Research, Applied Mathematics program under the contract ERKJ387 at the Oak Ridge National Laboratory, which is operated by UT-Battelle, LLC, for the U.S. Department of Energy under Contract DE-AC05-00OR22725.

\bibliography{lit}
\bibliographystyle{abbrv}

\newpage
\appendix

\end{document}